\documentclass[final]{opt2026}

\usepackage{microtype}

\makeatletter
\renewcommand*{\@titlefoot}{}
\makeatother

\allowdisplaybreaks[2]
\newcommand{\R}{\mathbb{R}}
\newcommand{\E}{\mathbb{E}}
\newcommand{\Pp}{\mathbb{P}}
\newcommand{\Law}{\mathcal{L}}
\newcommand{\cN}{\mathcal{N}}
\newcommand{\cF}{\mathcal{F}}
\newcommand{\cC}{\mathcal{C}}
\newcommand{\Id}{I}
\newcommand{\op}{\mathrm{op}}
\newcommand{\tr}{\operatorname{tr}}
\newcommand{\Lip}{\operatorname{Lip}}
\newcommand{\norm}[1]{\lVert#1\rVert}
\newcommand{\ip}[2]{\langle #1,#2\rangle}
\newcommand{\frobip}[2]{\langle #1,#2\rangle_{\mathrm F}}
\newcommand{\ind}{\mathbf{1}}
\newcommand{\dd}{\,\mathrm{d}}
\newcommand{\Wone}{W_1}
\newcommand{\Wjoint}{W_{d_{\alpha,\kappa}}}
\newcommand{\Palpha}{P_\alpha}
\newcommand{\Zspace}{\mathsf{Z}}

\title[Sharp Stationary Gaussian Approximation for Constant-Stepsize SGD]{Sharp Stationary Gaussian Approximation for Constant-Stepsize SGD}

\optauthor{%
\Name{Junghoon Seo}\\
\addr PIT IN Corp., South Korea}

\begin{document}
\maketitle

\begin{abstract}
We prove a sharp Gaussian approximation for the invariant law of constant-stepsize SGD with bounded additive noise generated by an exogenous uniformly ergodic Markov chain. For a smooth, strongly convex objective with a Lipschitz Hessian and nondegenerate long-run noise covariance, the centered iterate normalized by the square root of the stepsize is $O(\sqrt{\alpha})$-close in 1-Wasserstein distance to its limiting Gaussian. The proof combines blockwise Gaussian comparison with long-run contraction. A four-state example gives a matching lower bound although the one-time noise marginal is symmetric and every nonzero-lag autocovariance vanishes. In this example, an adjacent third-order mixed moment produces the leading correction.
\end{abstract}

\section{Introduction}
Constant-stepsize stochastic gradient descent (SGD) is naturally studied through its invariant distribution when the deterministic drift is stable and the gradient noise persists \citep{Pflug1986,DieuleveutDurmusBach2020,ChenMouMaguluri2022}. We consider
\begin{equation}
X_{k+1}=X_k-\alpha\nabla f(X_k)+\alpha\xi(Z_k),
\qquad k\ge0,
\label{eq:sgd}
\end{equation}
where $f:\R^d\to\R$ is smooth and strongly convex, $(Z_k)$ is an exogenous Markov chain with invariant distribution $\pi$, and $\xi:\Zspace\to\R^d$ is centered under $\pi$. Exogeneity means that the transition kernel of $(Z_k)$ is independent of the iterate. At stationarity, $X_k$ and $Z_k$ are generally dependent because the iterate contains the accumulated effect of earlier Markov states.

Let $x_\star$ be the unique minimizer of $f$ and set $H=\nabla^2f(x_\star)$. The scale $\sqrt\alpha$ is suggested by a simple variance calculation. Over $O(\!\alpha^{-1})$ effective steps, the recursion accumulates noise increments of size $\alpha$, so their total standard deviation is of order $\sqrt{\alpha^{-1}}\,\alpha=\sqrt\alpha$. For a stationary chain, temporal dependence changes the covariance through the long-run covariance
\begin{equation}
\Sigma_M
=\E[\xi(Z_0)\xi(Z_0)^\top]
+\sum_{\ell\ge1}\E\!\left[
\xi(Z_0)\xi(Z_\ell)^\top+\xi(Z_\ell)\xi(Z_0)^\top
\right].
\label{eq:long-run-covariance}
\end{equation}
Under the assumptions below, this series converges absolutely. The Gaussian benchmark is $\gamma=\cN(0,\Sigma)$, where
\begin{equation}
H\Sigma+\Sigma H=\Sigma_M.
\label{eq:lyapunov}
\end{equation}
This is the covariance equation for the Ornstein--Uhlenbeck approximation obtained by linearizing the drift at $x_\star$ \citep{Pflug1986,ChenMouMaguluri2022}.

\paragraph{Main contribution.}
\citet{WangEtAl2026} give an $O(\!\sqrt\alpha\log(1/\alpha))$ upper bound for the $\Wone$ distance between the $\alpha^{-1/2}$-scaled centered stationary law and its Gaussian limit. We improve this to $O(\!\sqrt\alpha)$ (Theorem~\ref{thm:main}) and give a four-state example attaining the matching lower bound (Theorem~\ref{thm:lower}).

\paragraph{Proof architecture.}
The proof separates a finite-block Gaussian approximation from long-run contractivity. One block contains $n=\lceil T/\alpha\rceil$ updates, so its deterministic contraction factor is approximately $e^{-mT}$. Within a block, we linearize the drift and use a Poisson equation to decompose the Markovian noise into a martingale difference and a coboundary. The propagated Gaussian initial condition smooths Lipschitz test functions, which permits a second-order Lindeberg expansion. A second Poisson equation telescopes the state-dependent conditional covariance. A coalescent coupling and strong convexity then transfer the block estimate to the invariant law.

\paragraph{Related work.}
Stationary and diffusion descriptions of constant-stepsize stochastic approximation appear in \citet{Pflug1986,MandtHoffmanBlei2017,DieuleveutDurmusBach2020}, and general stationary weak limits are developed by \citet{ChenMouMaguluri2022}. Poisson-equation methods for Markovian stochastic approximation are classical \citep{BenvenisteMetivierPriouret2012,KushnerYin2003,Fort2015}. Quantitative analyses use concentration, Lyapunov methods, coupling, and linear stochastic-approximation techniques \citep{ThoppeBorkar2019,ChenMaguluriShakkottaiShanmugam2023,MouPananjadyWainwrightBartlett2024,DurmusMoulinesNaumovSamsonov2024}. Stationary bias and prelimit coupling under Markovian noise are studied in \citet{HuoChenXie2026,HuoZhangChenXie2024,AllmeierGast2024,MeradGaiffas2025,ZhangHuoChenXie2026}. Dependent-data and finite-horizon Gaussian approximations provide complementary benchmarks \citep{ZhangXie2026,WeiLiLouWu2025,HaqueEtAl2026,KongSrikant2026}. Appendix~\ref{app:related-work} discusses connections with recent Gaussian approximation results~\cite{ZhangXie2026,HaqueEtAl2026,ZhangXieMartingale2026,ZhangXieSteadyState2026}.

\section{Problem setup and main results}
\label{sec:setup}
Fix $d\ge1$. Vector norms and $\ip{\cdot}{\cdot}$ are Euclidean, matrix inequalities use the Loewner order, and $\Lip(h)$ denotes the Lipschitz constant of $h$. For matrices, $\norm{\cdot}_{\op}$, $\norm{\cdot}_F$, and $\norm{\cdot}_*$ denote the operator, Frobenius, and nuclear norms. We write $\Id_d$ for the identity matrix and, for $B,C\in\R^{d\times d}$, $\frobip{B}{C}=\tr(B^\top C)$. The notation $\ind_A$ denotes the indicator of $A$. For a bounded function $g$ taking values in a normed space, let $\norm{g}_\infty=\sup_z\norm{g(z)}$, where the pointwise norm is understood from context and is displayed explicitly for matrices when needed. Unquantified constants denoted by $C$ or $C_T$ are independent of $\alpha$ and may change from line to line. Explicitly quantified constants remain fixed. For probability measures on $\R^d$ with finite first moments, we use the Kantorovich--Rubinstein dual representation as the definition
\begin{equation}
\Wone(\mu,\nu)
=\sup_{\Lip(h)\le1}\left|\int h\dd\mu-\int h\dd\nu\right|.
\end{equation}
Every coupling $(U,V)$ of $(\mu,\nu)$ satisfies $\Wone(\mu,\nu)\le\E\norm{U-V}$ by weak duality \citep{Villani2009}.

\paragraph{Objective assumptions.}
Assume that $f$ is twice continuously differentiable and that there are constants $0<m\le L$ and $0\le M<\infty$ such that
\begin{equation}
m\Id_d\preceq\nabla^2f(x)\preceq L\Id_d,
\qquad
\norm{\nabla^2f(x)-\nabla^2f(y)}_{\op}\le M\norm{x-y}
\label{eq:f-ass}
\end{equation}
for all $x,y\in\R^d$. Thus $f$ is $m$-strongly convex and $L$-smooth, its Hessian is $M$-Lipschitz, and $H=\nabla^2f(x_\star)$ satisfies $m\Id_d\preceq H\preceq L\Id_d$.

\paragraph{Markovian noise assumptions.}
Let $P$ be a Markov kernel on a Polish space $\Zspace$ with invariant probability measure $\pi$. We also use $P$ for the associated Markov operator, $Pg(z)=\int g(u)P(z,\dd u)$. Assume that, for constants $C_Z<\infty$ and $\rho\in(0,1)$,
\begin{equation}
\sup_{z\in\Zspace}\norm{P^k(z,\cdot)-\pi}_{\mathrm{TV}}
\le C_Z\rho^k,
\qquad k\ge0.
\label{eq:uniform-ergodicity}
\end{equation}
Here $\norm{\mu-\nu}_{\mathrm{TV}}=\sup_B|\mu(B)-\nu(B)|$, with the supremum over Borel sets. Assume also that the measurable observable $\xi:\Zspace\to\R^d$ satisfies
\begin{equation}
\int\xi(z)\pi(\dd z)=0,
\qquad
\sup_z\norm{\xi(z)}\le b<\infty.
\label{eq:xi-ass}
\end{equation}
We write $\E_\pi$ for expectation under a stationary $P$-chain. An unsubscripted $\E$ refers to the probability space currently under consideration.
For a stationary $P$-chain, define
\begin{equation}
\Gamma_\ell=\E_\pi[\xi(Z_0)\xi(Z_\ell)^\top],
\quad \ell\ge0,
\qquad
\Gamma_{-\ell}=\Gamma_\ell^\top,
\quad \ell\ge1.
\label{eq:Gamma-def}
\end{equation}
Appendix~\ref{app:poisson} proves
\begin{equation}
C_\Gamma:=\sum_{\ell\in\mathbb Z}\norm{\Gamma_\ell}_*<\infty.
\label{eq:Cgamma}
\end{equation}
Hence $\Sigma_M=\sum_{\ell\in\mathbb Z}\Gamma_\ell$ agrees with \eqref{eq:long-run-covariance}. We assume $\Sigma_M\succ0$. The Lyapunov equation \eqref{eq:lyapunov} then has the unique positive-definite solution
\begin{equation}
\Sigma=\int_0^\infty e^{-tH}\Sigma_Me^{-tH}\dd t,
\qquad \gamma=\cN(0,\Sigma).
\end{equation}
Moreover,
$\lambda_{\min}(\Sigma)\ge\lambda_{\min}(\Sigma_M)/(2L)$ and
$\norm{\Sigma}_{\op}\le\norm{\Sigma_M}_{\op}/(2m)$. Thus the constants below can be chosen with the dependence stated in Theorem~\ref{thm:main}.

\begin{theorem}[Gaussian approximation for the invariant law]
\label{thm:main}
Under \eqref{eq:f-ass}--\eqref{eq:xi-ass} and $\Sigma_M\succ0$, there are constants $\alpha_\star>0$ and $C<\infty$, depending only on $d,m,L,M,b,C_Z,\rho$, and $\Sigma_M$, such that the joint recursion
\[
Z_{k+1}\mid (X_j,Z_j)_{j\le k}\sim P(Z_k,\cdot),
\qquad
X_{k+1}=X_k-\alpha\nabla f(X_k)+\alpha\xi(Z_k)
\]
has a unique invariant probability measure $\Pi_\alpha$ for every $0<\alpha\le\alpha_\star$. Its $Z$-marginal is $\pi$. If $(X_\alpha,Z_\alpha)\sim\Pi_\alpha$, then
\begin{equation}
\Wone\!\left(
\Law\!\left(\frac{X_\alpha-x_\star}{\sqrt\alpha}\right),
\cN(0,\Sigma)
\right)
\le C\sqrt\alpha.
\label{eq:main-bound}
\end{equation}
Consequently,
\begin{equation}
\Wone\bigl(\Law(X_\alpha),\cN(x_\star,\alpha\Sigma)\bigr)
\le C\alpha.
\label{eq:main-unscaled}
\end{equation}
\end{theorem}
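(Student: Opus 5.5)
We follow the architecture announced in the introduction: a one-block Gaussian comparison combined with long-run contraction of the joint chain. \emph{Existence and uniqueness} of $\Pi_\alpha$ come first. Drive two copies of the recursion with the \emph{same} Markov noise sequence (synchronous coupling). For $\alpha\le 1/L$, strong convexity and smoothness give $\norm{X_{k+1}-X'_{k+1}}\le(1-\alpha m)\norm{X_k-X'_k}$. For differing initial $Z$-states, we first use a coalescent coupling of the $Z$-chains: by \eqref{eq:uniform-ergodicity} the chains meet after a geometric time, and afterwards we run them synchronously. This gives a contraction for a distance of the form $d_{\alpha,\kappa}((x,z),(x',z'))=\norm{x-x'}+\kappa\sqrt\alpha\,\ind_{z\ne z'}$. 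Moment bounds, namely $\E\norm{X_k-x_\star}^2\le C\alpha$ uniformly in $k$, follow from a Lyapunov computation in which the Poisson solution $\hat\xi=\sum_{k}P^k\xi$ (bounded by Appendix~\ref{app:poisson}) absorbs the correlation. Banach's fixed point theorem in the Wasserstein space then yields a unique $\Pi_\alpha$, and its $Z$-marginal is $\pi$ by exogeneity.

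\emph{Block estimate.} Fix $T$ and $n=\lceil T/\alpha\rceil$. Start a block from $X_0=x_\star+\sqrt\alpha\,G$, with $G\sim\gamma$ independent of $Z_0\sim\pi$. Let $Y_k=(X_k-x_\star)/\sqrt\alpha$. Linearize the drift as $\nabla f(x)=H(x-x_\star)+R(x)$ with $\norm{R(x)}\le M\norm{x-x_\star}^2/2$. This gives
\[
Y_{k+1}=(\Id_d-\alpha H)Y_k+\sqrt\alpha\,\xi(Z_k)-\alpha^{-1/2}\alpha R.
\]
The remainder contributes $O(\sqrt\alpha)$ in $L^1$ after $n$ steps, using the moment bounds. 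Decompose $\xi=\hat\xi-P\hat\xi$, so that $\xi(Z_k)=\bigl(\hat\xi(Z_{k+1})-P\hat\xi(Z_k)\bigr)+\bigl(\hat\xi(Z_k)-\hat\xi(Z_{k+1})\bigr)$, a martingale difference plus a coboundary. Summation by parts against the slowly varying weights $(\Id_d-\alpha H)^{n-k}$ shows that the coboundary contributes $O(\sqrt\alpha)$ in sup norm.

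For the martingale part, compare with a linear Gaussian recursion that starts from $G$ and is driven by i.i.d. $\cN(0,\alpha\Sigma_M)$ increments. This Gaussian recursion preserves $\cN(0,\Sigma_\alpha)$, where $\Sigma_\alpha=\Sigma+O(\alpha)$ solves the discrete Lyapunov equation. We run a Lindeberg replacement, one step at a time. The smoothing needed is that each test function $h$ with $\Lip(h)\le1$ is convolved with the propagated Gaussian $(\Id_d-\alpha H)^{n-k}G$. Since $\Sigma\succ0$, this makes $h$ smooth, with second and third derivatives bounded by $C$ times inverse powers of the remaining variance. First-order terms vanish by the martingale property. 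Third-order terms have size $\alpha^{3/2}$ per step, so $O(\sqrt\alpha)$ in total.

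\emph{Main obstacle.} The second-order terms are the hard part. They involve the conditional covariance $\Phi(Z_k)=P(\hat\xi\hat\xi^\top)-P\hat\xi\,P\hat\xi^\top$, which is state-dependent and has mean $\Sigma_M$ under $\pi$. Bounding the fluctuation $\Phi-\Sigma_M$ naively gives $O(1)$ per unit time, which is too coarse. We solve a second Poisson equation $\Psi-P\Psi=\Phi-\Sigma_M$ and telescope again by summation by parts. The price is differences of the Hessian of the smoothed test function between consecutive steps, which are $O(\sqrt\alpha)$ times third derivatives. Handling correlations between $Z_k$ and the current $Y_k$ also requires a Poisson argument. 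The Hessian bounds blow up near the end of the block, where the remaining variance is small. We intend to control this by keeping the Gaussian initial mass, i.e.\ smoothing with the full $\Sigma$-variance propagated through $e^{-tH}$. If a $\log(1/\alpha)$ factor appears here, it must be removed precisely at this step, by using the fixed-variance smoothing from $G$ rather than the shrinking remaining-time variance. The block estimate is then $\Wone(\Law(Y_n),\gamma)\le C_T\sqrt\alpha$.

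\emph{Transfer to stationarity.} Let $\nu$ be the $X$-marginal of $\Pi_\alpha$, scaled. By the contraction over one block, with factor $e^{-mT}$ on the $x$-component, we have
\[
\Wone(\nu,\gamma)\le e^{-mT}\Wone(\nu,\gamma)+C_T\sqrt\alpha+\text{(error from $Z_0\sim\pi$ vs.\ the dependent stationary $Z$)}.
\]
The last error is handled by the $\kappa\sqrt\alpha$ term in $d_{\alpha,\kappa}$, and after scaling it is $O(1)\cdot e^{-mT}$-damped plus $O(\sqrt\alpha)$. Choosing $T$ with $e^{-mT}\le 1/2$ gives \eqref{eq:main-bound}. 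Multiplying by $\sqrt\alpha$ then gives \eqref{eq:main-unscaled}.
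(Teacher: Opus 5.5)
Your proposal follows essentially the same route as the paper: linearize the drift, split the noise with the first Poisson equation and weighted summation by parts, run a Lindeberg replacement smoothed by the propagated initial Gaussian $A^nG$, telescope the conditional covariance with the second Poisson equation, and transfer via the coalescent-coupling contraction in $d_{\alpha,\kappa}$. The smoothing covariance $A^n\Sigma A^n\succeq c_T\Id_d$ is uniform over the whole block, so no $\log(1/\alpha)$ arises and your hedging there is unnecessary. The remaining deviations are peripheral and valid: existence via a Banach fixed point for the block kernel instead of the paper's pullback construction, and a Lyapunov/Poisson moment bound on the nonlinear iterate instead of the paper's explicit second moment of the linear recursion.
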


\begin{theorem}[Matching lower bound from temporal dependence]
\label{thm:lower}
There are constants $c_0,\alpha_0>0$ and a scalar instance satisfying \eqref{eq:f-ass}--\eqref{eq:xi-ass} such that, for the stationary scaled iterate $Y_\alpha=(X_\alpha-x_\star)/\sqrt\alpha$,
\begin{equation}
\Wone\bigl(\Law(Y_\alpha),\cN(0,5/4)\bigr)
\ge c_0\sqrt\alpha,
\qquad 0<\alpha\le\alpha_0.
\label{eq:lower-main}
\end{equation}
The stationary noise marginal is symmetric, $\E[\xi(Z_0)^3]=0$, and $\E[\xi(Z_0)\xi(Z_\ell)]=0$ for every $\ell\in\mathbb Z\setminus\{0\}$. The leading correction is generated by the mixed moment $\E[\xi(Z_0)\xi(Z_1)^2]=-9/4$. Independent sampling from the same marginal gives zero expectation for the bounded $1$-Lipschitz test function $h(x)=\sin x$.
\end{theorem}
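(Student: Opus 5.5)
We work with the quadratic objective $f(x)=x^2/2$ in dimension $d=1$, so that $x_\star=0$, $H=1$, $M=0$, and \eqref{eq:lyapunov} reduces to $\Sigma=\Sigma_M/2$. Hence we need a four-state chain with $\Sigma_M=5/2$. We will write down an explicit $4\times4$ transition matrix $P$ with values $\xi$ and verify the following properties by direct finite computation. First, $P$ is irreducible and aperiodic, hence uniformly ergodic as in \eqref{eq:uniform-ergodicity}. Second, $\pi$ and $\xi$ make the marginal law of $\xi(Z_0)$ symmetric. Third, $\Gamma_\ell=0$ for every $\ell\neq0$; then $\Sigma_M=\Gamma_0=\E\xi^2$, and we require $\E\xi^2=5/2$. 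Fourth, $\E[\xi(Z_0)\xi(Z_1)^2]=-9/4$. Orthogonality of all nonzero lags holds if $\xi$ lies in a left/right structure where $P\xi$ (and $P^\top$-type quantities) vanish. The simplest sufficient condition is $P\xi\equiv0$, which gives $\Gamma_\ell=\E_\pi[\xi(Z_0)(P^\ell\xi)(Z_0)]=0$ for $\ell\ge1$. At the same time $P(\xi^2)$ is allowed to be non-constant, so that $\E[\xi(Z_0)\xi(Z_1)^2]=\E_\pi[\xi\,P\xi^2]\neq0$.

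For the linear recursion, the stationary iterate is explicit:
\[
Y_\alpha=\sqrt\alpha\sum_{j\ge0}(1-\alpha)^j\xi(Z_{-1-j}),
\]
with $(Z_k)$ a two-sided stationary chain. Since $\Gamma_\ell=0$ for $\ell\ne0$, we get $\operatorname{Var}(Y_\alpha)=\alpha\,\E\xi^2/(1-(1-\alpha)^2)=5/4+O(\alpha)$. Because $\cN(0,5/4)$ is symmetric and $\sin$ is odd and $1$-Lipschitz, the Gaussian side of $\int\sin\dd\mu$ vanishes. Therefore
\[
\Wone\bigl(\Law(Y_\alpha),\cN(0,5/4)\bigr)\ge|\E\sin Y_\alpha|=|\operatorname{Im}\E e^{iY_\alpha}|.
\]
The same oddness argument applied to i.i.d.\ sampling from the symmetric marginal gives zero expectation, which is the final remark of Theorem~\ref{thm:lower}.

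It remains to show $|\E\sin Y_\alpha|\ge c_0\sqrt\alpha$. We will use a cumulant (Edgeworth-type) expansion of $\log\E e^{itY_\alpha}$ at $t=1$. The third cumulant is
\[
\kappa_3=\alpha^{3/2}\sum_{a,b,c}w_aw_bw_c\,\E[\xi_a\xi_b\xi_c],\qquad w_j=(1-\alpha)^j .
\]
The diagonal terms vanish because $\E\xi^3=0$. The adjacent terms contribute $\E[\xi_0\xi_1^2]=-9/4$ and $\E[\xi_0^2\xi_1]$. Terms with larger gaps are controlled using $P\xi=0$ and geometric mixing; in the chosen example we will arrange that they vanish exactly. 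Summing the weights, $\sum_j w_j^3\sim1/(3\alpha)$, gives $\kappa_3=c_3\sqrt\alpha+O(\alpha^{3/2})$ with an explicit nonzero $c_3$ proportional to $-9/4$ (plus the other adjacent moment, which the construction will make zero or of the same sign). Higher joint cumulants of a uniformly ergodic bounded chain are summable, so the $r$-th cumulant of $Y_\alpha$ is $O(\alpha^{r/2-1})$; in particular $\kappa_4=O(\alpha)$. This yields
\[
\E e^{iY_\alpha}=e^{-5/8}\Bigl(1-\tfrac{i}{6}\kappa_3\Bigr)+O(\alpha),
\]
and hence $\E\sin Y_\alpha=-\tfrac{1}{6}e^{-5/8}c_3\sqrt\alpha+O(\alpha)$. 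Choosing $\alpha_0$ small then gives \eqref{eq:lower-main}.

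The main obstacle is rigorously controlling the remainder of the cumulant expansion for a Markov-dependent sum, that is, bounding $\kappa_r(Y_\alpha)$ uniformly. We would first try to bypass general cumulant theory by exploiting the finite-state structure. Since $X$ given $Z$ is an explicit linear functional, we would write $\E e^{iY_\alpha}$ as a product of matrices $P\,\mathrm{diag}(e^{i\sqrt\alpha w_j\xi})$ and Taylor-expand each factor to third order. The spectral gap of $P$ then controls the accumulated error, in the spirit of the blockwise Lindeberg argument behind Theorem~\ref{thm:main}. A second concern is building the explicit four-state chain that satisfies all the moment constraints simultaneously. We expect to handle this by parametrizing $\xi\in\{\pm u,\pm v\}$ and a symmetric-under-sign-flip kernel with $P\xi=0$, and then solving for the parameters.
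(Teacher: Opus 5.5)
You follow the paper's overall route: the quadratic objective, a chain with $P\xi\equiv0$ so that every nonzero-lag covariance vanishes, the odd test function $\sin$, and a third-cumulant expansion of $\E e^{\mathrm{i}Y_\alpha}$. Your constant $-\frac16e^{-5/8}c_3$ with $c_3=-9/4$ would reproduce the paper's $\frac38e^{-5/8}$. There are, however, two genuine gaps.

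\textbf{The instance is never constructed, and your recipe for it cannot work.} The theorem is existential, so the example is part of the proof.
\begin{itemize}
\item Suppose the kernel commutes with an involution $\sigma$ satisfying $\xi\circ\sigma=-\xi$. Then $\pi$ is $\sigma$-invariant, and the whole path $(\xi(Z_k))_k$ is jointly sign-symmetric. This forces $\E[\xi(Z_0)\xi(Z_1)^2]=0$, and even $\E\sin Y_\alpha=0$ exactly, by the same argument you give for i.i.d. sampling.
\item If ``symmetric'' instead means $P=P^\top$ (reversible for the uniform $\pi$), then $\E[\xi(Z_0)\xi(Z_1)^2]=\E_\pi[\xi\,P(\xi^2)]=\E_\pi[(P\xi)\,\xi^2]=0$.
\end{itemize}
So the chain must be non-reversible and must break joint sign symmetry while keeping a symmetric marginal.

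In addition, $P\xi\equiv0$ only removes third-order terms whose latest time is not repeated. One gets $\kappa_3=\sqrt\alpha\sum_{k\ge1}\E[\xi(Z_0)\xi(Z_k)^2]+O(\alpha^{3/2})$. The gaps $k\ge2$ could cancel the adjacent term unless, for example, $P^2=\pi$.

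The paper's choice meets all these constraints at once. It takes $Z_k=(\eta_{k-1},\eta_k)$ with i.i.d. Rademacher signs and $\xi(a,b)=b(3-a)/2$. Then $P\xi\equiv0$, $P(\xi^2)(a,b)=(3-b)^2/4$ is nonconstant, $P^2=\pi$, and the marginal is uniform on $\{\pm1,\pm2\}$.

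\textbf{The remainder of the cumulant expansion is not controlled.} Bounds $\kappa_r(Y_\alpha)=O_r(\alpha^{r/2-1})$ for each fixed $r$ do not give $\E e^{\mathrm{i}Y_\alpha}=e^{-5/8}(1-\frac{\mathrm{i}}6\kappa_3)+O(\alpha)$. The frequency $t=1$ is not small compared with the order-one spread of $Y_\alpha$. Truncating the cumulant series there needs growth control uniform in $r$, such as $|\kappa_r(Y_\alpha)|\le r!\,C^r\alpha^{r/2-1}$. It also needs nonvanishing of the characteristic function to define its logarithm.

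Your matrix-product fallback is what the paper actually carries out. The phrase ``the spectral gap controls the accumulated error'' is exactly where the work lies. In the paper it goes as follows:
\begin{itemize}
\item The reversed series $\zeta_j=\varepsilon_j(3-\varepsilon_{j+1})/2$ is $1$-dependent. Integrating the signs from left to right while retaining only $\varepsilon_j$ gives a $2\times2$ recursion for $(s_j,d_j)$.
\item An induction shows that the ratio $\chi_j=d_j/s_j$ satisfies $\Re\chi_j=\frac34\theta_{j-1}^2+O(\theta_{j-1}^4)$ and $\Im\chi_j=O(\theta_{j-1}^3)$, where $\theta_j=\sqrt\alpha(1-\alpha)^j$. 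This also shows $s_j\neq0$, which justifies the logarithm.
\item Summing $\log g_j=-\frac54\theta_j^2+\frac{9\mathrm{i}}8\theta_j\theta_{j-1}^2+O(\theta_j^4)$ yields the expansion with an $O(\alpha)$ remainder.
\end{itemize}
For a general four-state chain, you would need an analogous induction showing that the non-leading component of the propagated vector stays of order $\theta_j^2$.
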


The construction takes $Z_k=(\eta_{k-1},\eta_k)$ for independent Rademacher variables and $\xi(Z_k)=\eta_k(3-\eta_{k-1})/2$. Adjacent noise variables share $\eta_k$ and are dependent. Appendix~\ref{app:lower} derives the stationary weighted series, computes its exact second and third moments, and obtains
$\E[\sin(Y_\alpha)]=\tfrac38e^{-5/8}\sqrt\alpha+O(\alpha)$ through an elementary two-state characteristic-function recursion. Since $\sin$ is $1$-Lipschitz and every centered Gaussian law is symmetric, this expansion proves \eqref{eq:lower-main}.

\section{Block approximation and stationary transfer}
Define the scaled iterate and its deterministic update map by
\begin{align}
Y_k&=\frac{X_k-x_\star}{\sqrt\alpha},
&\mathcal T_\alpha(y)&=y-\sqrt\alpha\nabla f(x_\star+\sqrt\alpha y),
\\
Y_{k+1}&=\mathcal T_\alpha(Y_k)+\sqrt\alpha\xi(Z_k).
\label{eq:scaled-rec}
\end{align}
For $\alpha\le L^{-1}$, the mean-value theorem and \eqref{eq:f-ass} give
\begin{equation}
\norm{\mathcal T_\alpha(y)-\mathcal T_\alpha(y')}
\le(1-m\alpha)\norm{y-y'}.
\label{eq:T-contract}
\end{equation}
Thus one step has a contraction gap of order $\alpha$. Taking $n=\lceil T/\alpha\rceil$ steps turns this into a fixed contraction factor of order $e^{-mT}$.

Let $\Palpha$ be the transition kernel of $(Y_k,Z_k)$, also use $\Palpha$ for its Markov operator on functions, write $\lambda\Palpha^n$ for the law after $n$ steps from $\lambda$, and let $\nu=\gamma\otimes\pi$. The next propositions isolate the block approximation and joint stability. Appendix~\ref{app:reader-guide} gives a proof guide, followed by the formal arguments in Appendices~\ref{app:block-proof} and~\ref{app:joint-proof}.

\begin{proposition}[One-block Gaussian approximation]
\label{prop:block-defect}
For every fixed $T>0$, there are $C_T<\infty$ and $\alpha_T>0$ such that, if $(Y_0,Z_0)\sim\gamma\otimes\pi$ and $n=\lceil T/\alpha\rceil$, then
\begin{equation}
\Wone(\Law(Y_n),\gamma)\le C_T\sqrt\alpha,
\qquad 0<\alpha\le\alpha_T.
\end{equation}
\end{proposition}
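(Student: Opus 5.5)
We plan to prove the one-block estimate in four steps: linearize, replace the Markov noise by a martingale via a Poisson equation, run a Lindeberg swap against a Gaussian interpolation smoothed by the propagated initial law, and telescope the leftover covariance error with a second Poisson equation.

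\textbf{Step 1: linearization.} We first replace the update map by its linearization. Write $\mathcal T_\alpha(y)=(\Id_d-\alpha H)y+\sqrt\alpha\,r_\alpha(y)$, where $\norm{r_\alpha(y)}\le \tfrac{M}{2}\alpha\norm{y}^2$ by the Hessian-Lipschitz bound \eqref{eq:f-ass}. A uniform bound $\E\norm{Y_k}^2\le C$ for $k\le n$ follows from \eqref{eq:T-contract}, boundedness of $\xi$, and the Poisson decomposition below. Summing the remainders over $n=O(1/\alpha)$ steps with contraction then gives an $O(\sqrt\alpha)$ error in $L^1$. It therefore suffices to treat the linear recursion
\[
\tilde Y_{k+1}=A\tilde Y_k+\sqrt\alpha\,\xi(Z_k),\qquad A=\Id_d-\alpha H.
\]
Its solution is $\tilde Y_n=A^nY_0+\sqrt\alpha\sum_{k<n}A^{n-1-k}\xi(Z_k)$, with $Y_0\sim\gamma$ independent of $(Z_k)$.

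\textbf{Step 2: Poisson decomposition.} Let $g=\sum_{j\ge0}P^j\xi$ solve $g-Pg=\xi$. This $g$ is bounded by uniform ergodicity \eqref{eq:uniform-ergodicity}. Then $\xi(Z_k)=D_{k+1}+g(Z_k)-g(Z_{k+1})$ with martingale differences $D_{k+1}=g(Z_{k+1})-Pg(Z_k)$. We apply summation by parts to the coboundary part. Because $\norm{A^{j+1}-A^{j}}_{\op}\le C\alpha$, this part contributes $\sqrt\alpha$ times a bounded sum plus boundary terms $\sqrt\alpha\, g(\cdot)$. Its total size is $O(\sqrt\alpha)$ in $L^1$, after a one-index shift of the martingale. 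We are left with $S_n=A^nY_0+\sqrt\alpha\sum_{k<n}A^{n-1-k}D_{k+1}$.

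\textbf{Step 3: Lindeberg comparison.} Fix $h$ with $\Lip(h)\le1$ and compare $\E h(S_n)$ with $\E h(G)$, $G\sim\gamma$. Since $\gamma$ satisfies the discrete Lyapunov relation $A^n\Sigma A^{n\top}+\alpha\sum_{k<n}A^k\Sigma_MA^{k\top}=\Sigma+O(\alpha)$, with a tolerable $O(\alpha)$ covariance mismatch, we can write $G\overset{d}{=}A^nY_0+\sqrt\alpha\sum_kA^{n-1-k}N_{k+1}$ for independent $N_k\sim\cN(0,\Sigma_M)$. We swap $D_{k+1}$ for $N_{k+1}$ one at a time. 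The key smoothing device is that every hybrid contains the Gaussian $A^nY_0$, with covariance bounded below by $e^{-2LT}\lambda_{\min}(\Sigma)\Id_d$. Thus $h$ composed with Gaussian convolution has second and third derivatives bounded by $C_T$, where the third derivative costs one extra factor of the inverse covariance. A second-order Taylor expansion of each swap yields third-order terms of size $C_T\alpha^{3/2}$, hence $O(\sqrt\alpha)$ in total. The first-order terms vanish by the martingale property. The second-order term is
\[
\tfrac{\alpha}{2}\,\E\bigl\langle \nabla^2\phi_k(\cdot),\,A^{n-1-k}\bigl(V(Z_k)-\Sigma_M\bigr)A^{n-1-k\top}\bigr\rangle_F,
\]
where $V(z)=\E[D_1D_1^\top\mid Z_0=z]$ and $\pi(V)=\Sigma_M$.

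\textbf{Step 4: second Poisson equation (main obstacle).} The second-order residuals are of order $\alpha$ per step, so a crude sum gives only $O(1)$. The fix is to solve $w-Pw=V-\Sigma_M$, which gives a bounded matrix-valued $w$. We then telescope in $k$ once more. Each increment costs either $O(\alpha)$ from the change in $A^{n-1-k}$, or $O(\sqrt\alpha)$ from the change of the smoothed Hessian $\nabla^2\phi_k$ when one more increment of size $\sqrt\alpha$ is added, using the bounded third derivative. This produces a total of $O(\sqrt\alpha)$. The delicate point is that $\phi_k$ depends on the past trajectory, which correlates with $Z_k$. We must therefore condition carefully, expressing $\nabla^2\phi_k$ as a function of the prefix before $Z_k$, and possibly shifting the martingale index by one so that the telescoped terms are measurable. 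Collecting Steps 1–4 gives $\Wone(\Law(Y_n),\gamma)\le C_T\sqrt\alpha$.
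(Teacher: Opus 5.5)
Your proposal is correct and follows the paper's proof essentially step for step: you linearize the drift, apply the first Poisson corrector with weighted summation by parts, run a Lindeberg swap smoothed by the propagated Gaussian $A^nY_0$, use the second Poisson corrector to telescope the predictable covariance error, and finish with an $O(\alpha)$ comparison of Gaussian covariances. The differences are minor: the paper evaluates the Taylor remainder at the linear iterate $\tilde Y_k$, whose second moment follows directly from stationarity and covariance summability, so your corrector-based moment bound for the nonlinear $Y_k$ is valid but unnecessary; and your Step 4 concern is not a real obstacle, because the smoothed Hessian at the $k$th swap depends on $Z_0,\dots,Z_k$ (including $Z_k$, through $D_k$) and on the later Gaussians, which is all the Markov property needs to give $\E\frobip{\nabla^2\phi_k}{A^{n-1-k}Pw(Z_k)A^{n-1-k}}=\E\frobip{\nabla^2\phi_k}{A^{n-1-k}w(Z_{k+1})A^{n-1-k}}$ with no index shift.
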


For $\kappa>0$, equip $\R^d\times\Zspace$ with the cost
\begin{equation}
d_{\alpha,\kappa}((y,z),(y',z'))
=\norm{y-y'}+\kappa\sqrt\alpha\ind_{\{z\ne z'\}}.
\end{equation}
For product-space laws $\lambda$ and $\lambda'$ with finite first $Y$-moments, $\Wjoint(\lambda,\lambda')$ is the supremum of $|\int h\dd\lambda-\int h\dd\lambda'|$ over measurable functions $h:\R^d\times\Zspace\to\R$ satisfying
$|h(y,z)-h(y',z')|\le d_{\alpha,\kappa}((y,z),(y',z'))$.
Every coupling upper-bounds this distance by its expected cost. The factor $\sqrt\alpha$ matches the size of one noise increment in \eqref{eq:scaled-rec}.

\begin{proposition}[One-block contraction of the joint process]
\label{prop:block-contract}
There are $T>0$, $0<\kappa<\infty$, $q\in(0,1)$, and $\alpha_c>0$ such that, with $n=\lceil T/\alpha\rceil$,
\begin{equation}
\Wjoint(\lambda\Palpha^n,\lambda'\Palpha^n)
\le q\Wjoint(\lambda,\lambda')
\end{equation}
for all probability measures with finite first $Y$-moments and every $0<\alpha\le\alpha_c$.
\end{proposition}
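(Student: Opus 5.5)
By the coupling characterization of $\Wjoint$ (every coupling bounds it by its expected cost), it suffices to build, for each pair of initial points $((y,z),(y',z'))$, a coupling of the two $n$-step trajectories whose expected terminal cost is at most $q\,d_{\alpha,\kappa}((y,z),(y',z'))$. Gluing over an optimal (or near-optimal) coupling of $\lambda,\lambda'$ then gives the proposition. The coupling is coalescent on the $Z$-component. If $z=z'$, we drive both chains with the same $Z$-path. If $z\ne z'$, we run them independently until their first meeting time $\tau$, and identically afterwards. Uniform ergodicity \eqref{eq:uniform-ergodicity} gives a maximal coupling with $\Pp(\tau>k)\le C_Z\rho^k$. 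Both $Y$-chains use the same noise after $\tau$, so they differ only through the deterministic maps and the noise mismatch before $\tau$.

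Let $D_k=Y_k-Y_k'$. For $k\ge\tau$ the noise cancels, and \eqref{eq:T-contract} gives $\norm{D_{k+1}}\le(1-m\alpha)\norm{D_k}$. For $k<\tau$ we get $\norm{D_{k+1}}\le(1-m\alpha)\norm{D_k}+2b\sqrt\alpha$. Iterating over $n=\lceil T/\alpha\rceil$ steps yields
\[
\E\norm{D_n}\le(1-m\alpha)^n\norm{y-y'}+2b\sqrt\alpha\,\E[\tau\wedge n]\,\ind_{\{z\ne z'\}}
\le e^{-mT}\norm{y-y'}+2b\sqrt\alpha\,\frac{C_Z}{1-\rho}\,\ind_{\{z\ne z'\}}.
\]
The key point is that $\E\tau$ is bounded independently of $\alpha$, because the $Z$-chain does not slow down with $\alpha$. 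The terminal $Z$-mismatch contributes $\kappa\sqrt\alpha\,\Pp(\tau>n)\le\kappa\sqrt\alpha\,C_Z\rho^{n}$, which is tiny for small $\alpha$ since $n\ge T/\alpha$.

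Write $K_0=2bC_Z/(1-\rho)$. The expected terminal cost is then at most
\[
e^{-mT}\norm{y-y'}+\sqrt\alpha\,\ind_{\{z\ne z'\}}\bigl(K_0+\kappa C_Z\rho^{T/\alpha}\bigr).
\]
We choose the constants in this order. First take $T$ with $e^{-mT}\le1/2$. Next take $\kappa=4K_0$ (or $\kappa=1$ if $K_0=0$). Finally take $\alpha_c\le L^{-1}$ small enough that $C_Z\rho^{T/\alpha}\le1/4$. The coefficient of $\kappa\sqrt\alpha\ind_{\{z\ne z'\}}$ is then at most $1/4+1/4=1/2$. Hence the proposition holds with $q=1/2$: the cost is at most $\tfrac12\norm{y-y'}+\tfrac12\kappa\sqrt\alpha\ind_{\{z\ne z'\}}$.

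The main obstacle is measurability of the pointwise coupling in the gluing step, that is, choosing the $Z$-coupling as a measurable kernel in $(z,z')$ on a Polish space. I would handle it with the standard Markovian maximal-coupling construction, or else couple through a measurable splitting/Doeblin representation. With a Doeblin representation, $\E\tau$ is still bounded by a constant depending only on $C_Z,\rho$, which suffices. Since the $X$-dynamics are exogenous to $Z$, coupling the $Z$-paths first and then running the deterministic $Y$-recursion is legitimate.
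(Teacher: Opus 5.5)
\textbf{Overall.} Your architecture is the paper's. You coalesce the exogenous $Z$-paths, drive both scaled recursions with them, iterate $\Delta_{k+1}\le(1-m\alpha)\Delta_k+2b\sqrt\alpha\,\ind_{\{\tau>k\}}$, and tune $T$, $\kappa$, $\alpha_c$ to reach $q=1/2$. Your cruder bound through $\E[\tau\wedge n]$ is fine. It even avoids the paper's extra restriction $a_\alpha-\rho_{\mathrm{cpl}}\ge(1-\rho_{\mathrm{cpl}})/2$.

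\textbf{The coupling, as written, fails.} Running the chains independently until they first meet is not a maximal coupling, and it gives no tail bound on a general Polish space. Take $Z_k=(U_{k-1},U_k)$ with $(U_k)$ i.i.d.\ uniform on $[0,1]$. Then $P^2(z,\cdot)=\pi$, yet independent copies started from distinct states never coincide. Even a one-step Markovian maximal coupling with independent residuals never meets from states whose second coordinates differ: then $P(z,\cdot)\perp P(z',\cdot)$, and the same situation recurs at every step. You need either a maximal coupling of the whole path laws, or the construction of Lemma~\ref{lem:uniform-coupling}: maximally couple the endpoints of an $r$-step skeleton with $2C_Z\rho^r\le1/2$, fill in intermediate states from bridge laws, and move synchronously after meeting. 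The tail bound also needs correcting. A maximal coupling gives $\Pp(\tau>k)\le\norm{P^k(z,\cdot)-P^k(z',\cdot)}_{\mathrm{TV}}\le2C_Z\rho^k$, not $C_Z\rho^k$. For instance, at $k=0$ one has $\Pp(\tau>0)=1$ for $z\ne z'$, while $C_Z$ may be below $1$ on finite spaces. This only rescales $K_0$.

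\textbf{The measurability obstacle can be bypassed.} You single out measurability of the pointwise coupling in the gluing step. Because $\Wjoint$ is defined in the paper as a dual supremum, your gluing also tacitly needs Kantorovich--Rubinstein strong duality for the lower semicontinuous metric $d_{\alpha,\kappa}$. That is what lets you identify the primal cost of a near-optimal coupling of $\lambda,\lambda'$ with $\Wjoint(\lambda,\lambda')$. The paper avoids both issues. It uses the coupling only at each fixed pair $p,p'$ to get $|(\Palpha^nh)(p)-(\Palpha^nh)(p')|\le\tfrac12 d_{\alpha,\kappa}(p,p')$ for every admissible $h$. Then $2[\Palpha^nh-(\Palpha^nh)(p_0)]$ is admissible, and integrating it against $\lambda-\lambda'$ gives the contraction. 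This needs no gluing, no measurable selection over initial pairs, and no strong duality.
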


The one-block approximation uses two Poisson equations. We state them here because they identify the two places where serial dependence enters the proof.

\begin{lemma}[Noise and covariance correctors]
\label{lem:poisson}
Let $\cF_k=\sigma(Z_0,\ldots,Z_k)$ and define
\begin{align}
V&=\sum_{j\ge0}P^j\xi,
\\
D_{k+1}&=V(Z_{k+1})-PV(Z_k),
\label{eq:D-def}\\
\cC(z)&=P(VV^\top)(z)-(PV)(z)(PV)(z)^\top,
\\
W&=\sum_{j\ge0}P^j(\cC-\Sigma_M).
\end{align}
Both series converge uniformly, with nuclear-norm convergence for $W$. Moreover, $V-PV=\xi$, $(D_{k+1})$ is a bounded martingale difference sequence, and
\begin{equation}
\E[D_{k+1}D_{k+1}^\top\mid\cF_k]=\cC(Z_k),
\qquad
\int\cC\dd\pi=\Sigma_M.
\end{equation}
The exact identities
\begin{equation}
\xi(Z_k)=D_{k+1}+V(Z_k)-V(Z_{k+1}),
\qquad
W-PW=\cC-\Sigma_M
\label{eq:mart-cob}
\end{equation}
will be used for summation by parts and covariance telescoping.
\end{lemma}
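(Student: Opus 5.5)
The plan is to work from the uniform ergodicity bound \eqref{eq:uniform-ergodicity}, first converting it into operator bounds. For any bounded measurable $g$ (vector- or matrix-valued) with $\int g\dd\pi=0$, we have
\[
\norm{P^jg(z)}=\Bigl\|\int g\,\dd\bigl(P^j(z,\cdot)-\pi\bigr)\Bigr\|\le 2C_Z\rho^j\norm{g}_\infty .
\]
For matrix-valued $g$ we apply this to each entry, or to $\tr(Bg)$ with $\norm{B}_{\op}\le1$, which yields a nuclear-norm version with a dimension-free constant. Since $\xi$ is bounded and centered, the series defining $V$ converges uniformly and $\norm{V}_\infty\le 2C_Zb/(1-\rho)+b$. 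Telescoping gives $V-PV=\lim_N(\xi-P^{N+1}\xi)=\xi$, where uniform convergence justifies interchanging $P$ with the sum. Rearranging $V(Z_k)-PV(Z_k)=\xi(Z_k)$ and adding and subtracting $V(Z_{k+1})$ gives the first identity in \eqref{eq:mart-cob} directly from \eqref{eq:D-def}.

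Next come the martingale-difference claims for $D_{k+1}$. Boundedness follows from that of $V$. The Markov property gives $\E[V(Z_{k+1})\mid\cF_k]=PV(Z_k)$, so $\E[D_{k+1}\mid\cF_k]=0$. For the conditional covariance we expand
\[
\E[D_{k+1}D_{k+1}^\top\mid\cF_k]=P(VV^\top)(Z_k)-(PV)(PV)^\top(Z_k)=\cC(Z_k),
\]
since the cross terms equal $-(PV)(PV)^\top$. To obtain $\int\cC\dd\pi=\Sigma_M$, we use invariance of $\pi$:
\[
\int\cC\dd\pi=\int VV^\top\dd\pi-\int (PV)(PV)^\top\dd\pi .
\]
Substituting $V=\xi+PV$, this equals $\int\bigl(\xi\xi^\top+\xi(PV)^\top+(PV)\xi^\top\bigr)\dd\pi$. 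Writing $PV=\sum_{j\ge1}P^j\xi$ and using $\int \xi(P^j\xi)^\top\dd\pi=\E_\pi[\xi(Z_0)\xi(Z_j)^\top]=\Gamma_j$, we get $\Gamma_0+\sum_{j\ge1}(\Gamma_j+\Gamma_j^\top)=\Sigma_M$. The interchange of sum and integral is justified by uniform convergence. This computation also gives the absolute summability \eqref{eq:Cgamma}: $\norm{\Gamma_j}_*\le d\,\norm{\Gamma_j}_{\op}\le d\,b\cdot 2C_Z\rho^j b$, which is geometric.

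For $W$, note that $\cC$ is a bounded matrix function with $\norm{\cC}_\infty\le 2\norm{V}_\infty^2$ and, by the previous step, $\int(\cC-\Sigma_M)\dd\pi=0$. The matrix version of the ergodicity bound then gives $\norm{P^j(\cC-\Sigma_M)}_*\le C\rho^j$ uniformly in $z$, so the series converges uniformly in nuclear norm. The same telescoping as before, $W-PW=(\cC-\Sigma_M)-\lim_N P^{N+1}(\cC-\Sigma_M)$, yields the second identity in \eqref{eq:mart-cob}.

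No step is a real obstacle. The only delicate point is bookkeeping: justifying that $P$ commutes with the infinite sums, which uniform convergence and boundedness handle via dominated convergence, and tracking the transpose convention when matching $\int \xi(PV)^\top\dd\pi$ with $\Gamma_j$ as defined in \eqref{eq:Gamma-def}.
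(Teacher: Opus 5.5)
Your proof is correct and follows the paper's argument. Geometric decay of $P^jg$ for centered bounded $g$ gives uniform (nuclear-norm) convergence of both correctors, termwise telescoping gives the two Poisson identities, and your identification of $\int\cC\dd\pi$ is the paper's computation run in reverse: you substitute $V=\xi+PV$ into $\int VV^\top\dd\pi-\int(PV)(PV)^\top\dd\pi$, whereas the paper substitutes $\xi=V-PV$ into $\E_\pi[\xi V^\top+V\xi^\top-\xi\xi^\top]=\Sigma_M$. The remaining differences are only in constants, e.g.\ your factor $d$ in $\norm{\Gamma_j}_*\le d\norm{\Gamma_j}_{\op}$, which the paper avoids via the rank-one identity $\norm{uv^\top}_*=\norm{u}\norm{v}$.
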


The first corrector turns the Markov sum into a martingale sum plus a bounded coboundary remainder. The second turns predictable covariance fluctuations into an exact telescoping sum. These are the two points at which temporal dependence changes the Gaussian replacement. Appendix~\ref{app:joint-proof} constructs the invariant law and gives the complete stationary-transfer argument.

\section{Proof of Theorem~\ref{thm:main}}
Fix the block duration from Proposition~\ref{prop:block-contract}, let $n=\lceil T/\alpha\rceil$, and start from $(Y_0,Z_0)\sim\gamma\otimes\pi$. We follow the comparison path and record the order of each error.

\paragraph{Step 1: linearize the drift.}
Set $A=\Id_d-\alpha H$. This matrix is symmetric. Let
\[
\bar Y_{k+1}=A\bar Y_k+\sqrt\alpha\xi(Z_k),
\qquad \bar Y_0=Y_0.
\]
The Lipschitz Hessian condition and Taylor's formula give
\begin{equation}
\norm{\mathcal T_\alpha(y)-Ay}
\le\frac{M}{2}\alpha^{3/2}\norm{y}^2.
\label{eq:main-linearization}
\end{equation}
Stationarity of the Markov chain and absolute summability of $(\Gamma_\ell)$ imply
\[
\sup_{0\le k\le n}\E\norm{\bar Y_k}^2
\le\tr(\Sigma)+(T+1)C_\Gamma.
\]
Coupling $Y_k$ and $\bar Y_k$ with the same Markov path, then iterating \eqref{eq:T-contract} and \eqref{eq:main-linearization}, yields
\begin{equation}
\E\norm{Y_n-\bar Y_n}\le C_T\sqrt\alpha.
\label{eq:main-nonlinear}
\end{equation}
\paragraph{Step 2: reduce the Markov sum to a martingale sum.}
For $r=1,\ldots,n$, write $B_r=\sqrt\alpha A^{n-r}$. Substituting the first identity in \eqref{eq:mart-cob} into the explicit linear recursion and applying weighted summation by parts gives
\begin{equation}
\bar Y_n=A^nY_0+\sum_{r=1}^nB_rD_r+R_{V,n},
\end{equation}
where
\[
R_{V,n}=\sqrt\alpha\left[
A^{n-1}V(Z_0)-V(Z_n)
+\alpha\sum_{j=1}^{n-1}A^{n-1-j}HV(Z_j)
\right].
\]
The first two terms are endpoints, and the sum records the variation of the matrix weights. Since $V$ is bounded, $\norm{A^j}_{\op}\le1$, and $\alpha n\le T+1$,
\begin{equation}
\norm{R_{V,n}}\le C_T\sqrt\alpha.
\label{eq:main-coboundary}
\end{equation}

\paragraph{Step 3: replace the martingale increments by Gaussian increments.}
Let $N_r\stackrel{\mathrm{iid}}{\sim}\cN(0,\Sigma_M)$ be independent of $(Y_0,(Z_k))$. After restricting $\alpha\le(2L)^{-1}$, $\operatorname{Cov}(A^nY_0)\succeq c_T\Id_d$ for some $c_T>0$ independent of $\alpha$. Hence the function $\psi(u)=\E[h(A^nY_0+u)]$ has a bounded and Lipschitz Hessian for every $1$-Lipschitz $h$.

For the $r$th replacement, let $U_r=\sum_{1\le i<r}B_iD_i+\sum_{r<i\le n}B_iN_i$. Conditional centering cancels the linear terms, while the Taylor remainders total $O(n\alpha^{3/2})=O(\sqrt\alpha)$. With $K_r=B_r^\top\nabla^2\psi(U_r)B_r$, the second Poisson equation gives
\begin{align*}
\sum_{r=1}^n\E\frobip{K_r}{\cC(Z_{r-1})-\Sigma_M}
&=\E\frobip{K_1}{W(Z_0)}-\E\frobip{K_n}{W(Z_n)}\\
&\quad+\sum_{r=1}^{n-1}\E\frobip{K_{r+1}-K_r}{W(Z_r)}.
\end{align*}
The boundary terms are $O(\alpha)$, while $\E\norm{K_{r+1}-K_r}_{\op}=O(\alpha^{3/2}+\alpha^2)$. Thus the variation sum is $O(\sqrt\alpha)$, and
\begin{equation}
\Wone\!\left(
\Law\!\left(A^nY_0+\sum_{r=1}^nB_rD_r\right),
\Law\!\left(A^nY_0+\sum_{r=1}^nB_rN_r\right)
\right)
\le C_T\sqrt\alpha.
\label{eq:main-lindeberg}
\end{equation}

\paragraph{Step 4: identify the target covariance.}
The Gaussian vector in \eqref{eq:main-lindeberg} has covariance $\Sigma_{\alpha,n}=A^n\Sigma A^n+\alpha\sum_{j=0}^{n-1}A^j\Sigma_MA^j$. The identities $\Sigma_M=H\Sigma+\Sigma H$ and $A=\Id_d-\alpha H$ give
\begin{equation}
\Sigma_{\alpha,n}-\Sigma
=\alpha^2\sum_{j=0}^{n-1}A^jH\Sigma H A^j,
\qquad
\norm{\Sigma_{\alpha,n}-\Sigma}_F\le C_T\alpha.
\end{equation}
Appendix~\ref{app:smoothing} converts this covariance error into an $O(\alpha)$ Wasserstein error. Equations~\eqref{eq:main-nonlinear}, \eqref{eq:main-coboundary}, and \eqref{eq:main-lindeberg} then prove Proposition~\ref{prop:block-defect}.

\paragraph{Step 5: contract the joint process and transfer to stationarity.}
Uniform ergodicity gives a coupling with coalescence time $\tau$ satisfying $\Pp(\tau>k)\le C_{\mathrm{cpl}}\rho_{\mathrm{cpl}}^k$. Under this coupling, strong convexity contracts the iterate distance by $1-m\alpha$ after the Markov paths meet, while bounded noise adds at most $2b\sqrt\alpha$ before meeting. Choosing $T$ and $\kappa$ therefore makes the $n$-step joint kernel contract by a fixed factor $q<1$. Appendix~\ref{app:joint-proof} constructs its unique invariant law $\varpi_\alpha$ and proves $(1-q)\Wjoint(\varpi_\alpha,\nu)\le(C_T+2\kappa)\sqrt\alpha$. Projection gives \eqref{eq:main-bound}, and undoing the scaling gives \eqref{eq:main-unscaled}.

\bibliography{references}

\clearpage
\appendix

\section{Connections with recent Gaussian approximation results}
\label{app:related-work}

The CLT for time-varying observables in \citet[Theorem~3]{ZhangXie2026} offers an alternative to our direct Gaussian comparison for the linearized block. Fix $T>0$, let $n=\lceil T/\alpha\rceil$, and set $A=\Id_d-\alpha H$. The noise contribution to this block is
\begin{align*}
S_{\alpha,n}
&=\sqrt\alpha\sum_{k=0}^{n-1}A^{n-1-k}\xi(Z_k)
=\frac1{\sqrt n}\sum_{k=0}^{n-1}h_{\alpha,k}(Z_k),\\
h_{\alpha,k}(z)&=\sqrt{n\alpha}\,A^{n-1-k}\xi(z).
\end{align*}
For $0<\alpha\le\min\{1,L^{-1}\}$, these centered observables satisfy
$\sup_{0\le k<n}\norm{h_{\alpha,k}}_\infty\le\sqrt{T+1}\,b$.
Under stationarity, geometric covariance decay gives
\[
\operatorname{Cov}(S_{\alpha,n})
=\alpha\sum_{j=0}^{n-1}A^j\Sigma_MA^j+O(\alpha)
\longrightarrow
\int_0^T e^{-tH}\Sigma_Me^{-tH}\dd t\succ0
\quad\text{as }\alpha\downarrow0,
\]
where the error is measured in operator norm. For the fixed chain and horizon, these bounds make the CLT constants uniform for sufficiently small $\alpha$, yielding an $O(n^{-1/2})=O(\sqrt\alpha)$ Wasserstein-1 comparison with $\cN(0,\operatorname{Cov}(S_{\alpha,n}))$. The Gaussian comparator for $A^nY_0+S_{\alpha,n}$, with $Y_0\sim\gamma$ independent of the chain, has covariance $\Sigma+O(\alpha)$ and is $O(\alpha)$-close to $\gamma$ in Wasserstein-1 distance. The nonlinear comparison and joint contraction then yield our stationary bound. Our proof establishes the block comparison directly by Poisson--Lindeberg interpolation.

\citet[Proposition~4.5]{HaqueEtAl2026} obtain a finite-time lower bound for linear stochastic approximation from a nonzero third moment of an i.i.d. noise projection. Our matching stationary lower bound instead isolates temporal dependence: the noise marginal is symmetric and all nonzero-lag autocovariances vanish, yet $\E[\xi(Z_0)\xi(Z_1)^2]=-9/4$ produces the leading order-$\sqrt\alpha$ correction.

Concurrent work by \citet{ZhangXieMartingale2026,ZhangXieSteadyState2026} develops sharp Gaussian bounds for martingale sums from uniformly ergodic Markov chains and for contractive stochastic approximation with multiplicative Markov noise. Under local quadratic linearization, their stationary result gives the optimal $O(\sqrt\alpha)$ rate in Wasserstein-2 distance for this broader class of recursions. By H\"older's inequality, $\Wone(\mu,\nu)\le W_2(\mu,\nu)$ for the same pair of laws, so their stationary result also implies the $O(\sqrt\alpha)$ rate in Wasserstein-1 distance. For SGD with bounded additive Markov noise, our direct Wasserstein-1 proof uses a Gaussian initial condition for smoothing and Poisson correctors for the noise and covariance fluctuations. The four-state example above gives a matching $\Omega(\sqrt\alpha)$ lower bound driven by higher-order temporal dependence, even with zero stationary mean bias and vanishing nonzero-lag noise autocovariances.

\section{Reader's guide to the upper-bound proof}
\label{app:reader-guide}

This section gives a calculation-level roadmap for the upper bound. It explains the block length, the two Poisson equations, the Lindeberg interpolation, and the transfer from one block to stationarity. The formal estimates appear in Appendices~\ref{app:poisson}--\ref{app:smoothing}.

\subsection{Block length and the linear reference recursion}

After centering and scaling, the update is
\[
Y_{k+1}=\mathcal T_\alpha(Y_k)+\sqrt\alpha\,\xi(Z_k).
\]
For two points $y,y'$, the fundamental theorem of calculus gives
\[
\mathcal T_\alpha(y)-\mathcal T_\alpha(y')
=\left[\Id_d-\alpha\int_0^1
\nabla^2 f\bigl(x_\star+\sqrt\alpha(y'+t(y-y'))\bigr)\dd t\right](y-y').
\]
The averaged Hessian in brackets has eigenvalues in $[m,L]$. When $\alpha\le L^{-1}$, the bracketed matrix therefore has operator norm at most $1-m\alpha$. This proves the one-step contraction in \eqref{eq:T-contract}.

Linearizing the drift at $x_\star$ gives
\[
A=\Id_d-\alpha H,
\qquad
\bar Y_{k+1}=A\bar Y_k+\sqrt\alpha\,\xi(Z_k).
\]
The eigenvalues of $A$ lie in $[0,1-m\alpha]$, so
\[
\norm{A^j}_{\op}\le (1-m\alpha)^j\le e^{-m\alpha j}.
\]
One update has only an order-$\alpha$ contraction gap. A block of
$n=\lceil T/\alpha\rceil$ updates satisfies
\[
\norm{A^n}_{\op}\le e^{-m\alpha n}\le e^{-mT},
\]
which is a fixed contraction factor once $T$ is fixed.

Iterating the linear recursion gives
\[
\bar Y_n=A^nY_0+\sqrt\alpha\sum_{k=0}^{n-1}A^{n-1-k}\xi(Z_k).
\]
After restricting $\alpha\le(2L)^{-1}$, the first term remains a nondegenerate Gaussian vector over a fixed block when $Y_0\sim\gamma$. The second term is a matrix-weighted additive functional of the Markov chain.

\subsection{The first Poisson equation removes the conditional mean}

For a centered observable,
\[
P^j\xi(z)=\E[\xi(Z_j)\mid Z_0=z].
\]
Uniform ergodicity makes this conditional mean decay geometrically. Hence
\[
V=\sum_{j\ge0}P^j\xi
\]
converges uniformly and satisfies $V-PV=\xi$. Define
\[
D_{k+1}=V(Z_{k+1})-PV(Z_k).
\]
The Markov property yields $\E[D_{k+1}\mid\cF_k]=0$, and the Poisson equation gives
\[
\xi(Z_k)=D_{k+1}+V(Z_k)-V(Z_{k+1}).
\]
Thus the additive functional is a martingale sum plus a coboundary.

For the weights
\[
B_r=\sqrt\alpha A^{n-r},
\]
write $V_r=V(Z_r)$. Expanding the weighted coboundary sum and collecting the coefficient of each interior $V_r$ gives
\begin{align*}
\sum_{r=1}^{n}B_r(V_{r-1}-V_r)
&=B_1V_0-B_nV_n
+\sum_{r=1}^{n-1}(B_{r+1}-B_r)V_r.
\end{align*}
This is the discrete summation-by-parts identity used in the proof. The endpoint weights have operator norm at most $\sqrt\alpha$, while
\[
B_{r+1}-B_r
=\alpha^{3/2}A^{n-r-1}H.
\]
The variation of one weight is therefore $O(\alpha^{3/2})$. There are $O(\alpha^{-1})$ weights in a block, so boundedness of $V$ makes the entire coboundary correction $O(\sqrt\alpha)$.

\subsection{Gaussian smoothing and Lindeberg interpolation}

After the first Poisson equation, the leading variable is
\[
A^nY_0+\sum_{r=1}^{n}B_rD_r.
\]
Let $N_r$ be independent $\cN(0,\Sigma_M)$ vectors. The Lindeberg interpolation replaces $B_rD_r$ by $B_rN_r$ one index at a time.

The propagated initial condition provides the smoothing needed for a second-order Taylor expansion. Under the restriction $\alpha\le(2L)^{-1}$, its covariance is $A^n\Sigma A^n$, which is bounded below by $c_T\Id_d$ for a constant $c_T>0$ independent of $\alpha$. For a $1$-Lipschitz test function $h$, define
\[
\psi(u)=\E[h(A^nY_0+u)].
\]
Gaussian convolution gives a bounded Hessian and a Lipschitz Hessian for $\psi$.

At position $r$, set
\[
U_r=\sum_{i<r}B_iD_i+\sum_{i>r}B_iN_i,
\qquad
\mathcal G_r=\cF_{r-1}\vee\sigma(N_i:i>r).
\]
Then $U_r$ is $\mathcal G_r$-measurable. The future Gaussian variables are independent of the Markov chain, so
\[
\E[D_r\mid\mathcal G_r]=0,
\qquad
\E[D_rD_r^\top\mid\mathcal G_r]=\cC(Z_{r-1}).
\]
The Gaussian candidate $N_r$ is conditionally centered with covariance $\Sigma_M$. The linear terms in the two Taylor expansions cancel after conditioning. Since $\norm{B_r}_{\op}\le\sqrt\alpha$, each cubic remainder is $O(\alpha^{3/2})$. Summing over $n=O(\alpha^{-1})$ positions gives an $O(\sqrt\alpha)$ contribution.

\subsection{The second Poisson equation removes covariance fluctuations}

The quadratic Taylor term contains
\[
\cC(Z_{r-1})
=\E[D_rD_r^\top\mid\cF_{r-1}].
\]
The first Poisson equation is constructed so that
\[
\int\cC\dd\pi=\Sigma_M.
\]
Hence $\cC-\Sigma_M$ is centered under $\pi$. Uniform ergodicity gives the bounded matrix corrector
\[
W=\sum_{j\ge0}P^j(\cC-\Sigma_M),
\qquad
W-PW=\cC-\Sigma_M.
\]

Let
\[
K_r=B_r^\top\nabla^2\psi(U_r)B_r.
\]
Because $K_r$ is $\mathcal G_r$-measurable, the Markov property gives
\[
\E\frobip{K_r}{PW(Z_{r-1})}
=\E\frobip{K_r}{W(Z_r)}.
\]
Therefore
\begin{align*}
&\sum_{r=1}^{n}\E\frobip{K_r}{\cC(Z_{r-1})-\Sigma_M}\notag\\
&\quad=\sum_{r=1}^{n}\E\frobip{K_r}{W(Z_{r-1})}
-\sum_{r=1}^{n}\E\frobip{K_r}{W(Z_r)}\\
&\quad=\E\frobip{K_1}{W(Z_0)}-\E\frobip{K_n}{W(Z_n)}
+\sum_{r=1}^{n-1}\E\frobip{K_{r+1}-K_r}{W(Z_r)}.
\end{align*}
The second equality follows by shifting the index in the second sum. This is the stochastic summation-by-parts step.

The bounded Hessian gives $\norm{K_r}_{\op}=O(\alpha)$, so the two boundary terms are $O(\alpha)$. To control the variation, write $J_r=\nabla^2\psi(U_r)$ and decompose
\begin{align*}
K_{r+1}-K_r
&=(B_{r+1}-B_r)^\top J_{r+1}B_{r+1}
+B_r^\top(J_{r+1}-J_r)B_{r+1}\\
&\quad+B_r^\top J_r(B_{r+1}-B_r).
\end{align*}
The first and third terms are $O(\alpha^2)$. The middle term is $O(\alpha^{3/2})$ in expectation because the Hessian is Lipschitz and
\[
\E\norm{U_{r+1}-U_r}=O(\sqrt\alpha).
\]
Summing $O(\alpha^{-1})$ variation terms gives $O(\sqrt\alpha)$.

\subsection{Target covariance and transfer to stationarity}

After all replacements, the comparison variable is Gaussian with covariance
\[
\Sigma_{\alpha,n}
=A^n\Sigma A^n+\alpha\sum_{j=0}^{n-1}A^j\Sigma_MA^j.
\]
The continuous Lyapunov equation and $A=\Id_d-\alpha H$ give the one-step identity
\[
A\Sigma A+\alpha\Sigma_M
=\Sigma+\alpha^2H\Sigma H.
\]
Multiplying by $A^j$ on the left and right and summing over $j$ makes the left-hand side telescope. Hence
\[
\Sigma_{\alpha,n}-\Sigma
=\alpha^2\sum_{j=0}^{n-1}A^jH\Sigma H A^j.
\]
Since $n=O(\alpha^{-1})$, the covariance error is $O(\alpha)$.

Let $\varpi_\alpha$ be the invariant law of the joint scaled process and let $\nu=\gamma\otimes\pi$. The block coupling gives a fixed contraction factor $q<1$ in the joint transportation distance. The $Z$-marginal of both $\nu\Palpha^n$ and $\nu$ is $\pi$. For an admissible joint test, replacing the state coordinate by a fixed state changes each expectation by at most $\kappa\sqrt\alpha$. Proposition~\ref{prop:block-defect} therefore implies
\[
\Wjoint(\nu\Palpha^n,\nu)
\le \Wone(\Law(Y_n),\gamma)+2\kappa\sqrt\alpha
=O(\sqrt\alpha).
\]
Invariance and the triangle inequality now give
\[
\Wjoint(\varpi_\alpha,\nu)
\le q\Wjoint(\varpi_\alpha,\nu)
+\Wjoint(\nu\Palpha^n,\nu).
\]
Because $1-q$ is a fixed positive number, rearranging preserves the $O(\sqrt\alpha)$ order.

For reference, the estimates used in one block are summarized below.
\begin{center}
\begin{tabular}{p{0.27\textwidth}p{0.43\textwidth}p{0.13\textwidth}}
\hline
Source & Scale calculation & Block error\\
\hline
Nonlinear drift & $\alpha^{3/2}\sum_{k=0}^{n-1}\E\norm{\bar Y_k}^2$ & $O(\sqrt\alpha)$\\
First Poisson correction & $\sqrt\alpha+ n\alpha^{3/2}$ & $O(\sqrt\alpha)$\\
Lindeberg remainders & $n\alpha^{3/2}$ & $O(\sqrt\alpha)$\\
Conditional covariance & $\alpha+n(\alpha^{3/2}+\alpha^2)$ & $O(\sqrt\alpha)$\\
Discrete covariance & $n\alpha^2$ & $O(\alpha)$\\
\hline
\end{tabular}
\end{center}

\section{Consequences of uniform ergodicity}
\label{app:poisson}

This appendix develops the Markov-chain estimates used throughout the proof. For a centered function $g$, the quantity $P^kg(z)$ is the conditional mean of $g(Z_k)$ when the chain starts from $z$. Uniform ergodicity makes this conditional mean decay geometrically. That estimate yields three consequences: a coalescent coupling with a geometric tail, absolute summability of the stationary covariance sequence, and bounded solutions of the two Poisson equations in Lemma~\ref{lem:poisson}. For a bounded function $g$ taking values in a finite-dimensional normed space, recall the supremum-norm convention from Section~\ref{sec:setup} and write
\[
\pi g=\int g(z)\pi(\dd z).
\]

\begin{lemma}[Uniform decay of centered observables]
\label{lem:mixing-decay}
Let $g:\Zspace\to E$ be bounded and measurable, where $E$ is a finite-dimensional normed space, and suppose $\pi g=0$. Under \eqref{eq:uniform-ergodicity},
\begin{equation}
\sup_{z\in\Zspace}\norm{P^kg(z)}
\le 2C_Z\rho^k\norm{g}_\infty,
\qquad k\ge0.
\end{equation}
\end{lemma}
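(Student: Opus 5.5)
The idea is to write $P^kg(z)$ as an integral of $g$ against the signed measure $P^k(z,\cdot)-\pi$, and then bound that integral using the total variation estimate \eqref{eq:uniform-ergodicity}. Since $\pi g=0$, for every $z\in\Zspace$ and $k\ge0$ we have
\[
P^kg(z)=\int g(u)\,P^k(z,\dd u)-\int g(u)\,\pi(\dd u)=\int g(u)\,\mu_{z,k}(\dd u),
\qquad \mu_{z,k}:=P^k(z,\cdot)-\pi .
\]
The signed measure $\mu_{z,k}$ has total mass zero. Under the convention $\norm{\mu}_{\mathrm{TV}}=\sup_B|\mu(B)|$, its total variation measure satisfies $|\mu_{z,k}|(\Zspace)=2\norm{P^k(z,\cdot)-\pi}_{\mathrm{TV}}$.

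To see this, take a Hahn decomposition $\Zspace=S^+\cup S^-$. Zero total mass gives $\mu_{z,k}(S^+)=-\mu_{z,k}(S^-)$. Moreover, $\sup_B|\mu_{z,k}(B)|=\mu_{z,k}(S^+)$. Hence $|\mu_{z,k}|(\Zspace)=\mu_{z,k}(S^+)-\mu_{z,k}(S^-)=2\norm{\mu_{z,k}}_{\mathrm{TV}}$.

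Next we bound the vector-valued integral. Since $E$ is finite-dimensional, the Bochner integral is the coordinatewise integral, and the triangle inequality for integrals gives
\[
\Bigl\|\int g\,\dd\mu_{z,k}\Bigr\|\le\int\norm{g(u)}\,|\mu_{z,k}|(\dd u)\le\norm{g}_\infty\,|\mu_{z,k}|(\Zspace).
\]
If one prefers to avoid vector integration, the same bound follows by duality. For any linear functional $\ell$ with dual norm at most $1$, write $\ell(P^kg(z))=\int \ell\circ g\,\dd\mu_{z,k}$. The scalar function $\ell\circ g$ is bounded by $\norm{g}_\infty$, so this integral is at most $\norm{g}_\infty|\mu_{z,k}|(\Zspace)$. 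Taking the supremum over such $\ell$ gives the norm.

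Combining the two steps with \eqref{eq:uniform-ergodicity} yields
\[
\norm{P^kg(z)}\le 2\norm{g}_\infty\,\norm{P^k(z,\cdot)-\pi}_{\mathrm{TV}}\le 2C_Z\rho^k\norm{g}_\infty .
\]
Taking the supremum over $z$ gives the claim.

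The only point that needs care is the factor $2$, which comes from the sup-over-sets normalization of the TV norm. The one-sided bound $|\int \phi\,\dd\mu|\le \norm{\phi}_\infty|\mu|(\Zspace)$ is loose by exactly this factor when $\phi$ is not centered, so the stated constant is what this route naturally produces. Measurability of $z\mapsto P^kg(z)$ is standard for a Markov kernel and bounded measurable $g$, and it is not needed for the pointwise bound.
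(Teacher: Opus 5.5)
Your proof is correct and follows the paper's route: you write $P^kg(z)$ as an integral against $P^k(z,\cdot)-\pi$ using $\pi g=0$, then reduce to scalar test functions $\ell\circ g$ by norm duality, which is exactly what the paper does. Your Hahn-decomposition argument just makes explicit the inequality $\bigl|\int q\,\dd(\mu-\nu)\bigr|\le 2\norm{q}_\infty\norm{\mu-\nu}_{\mathrm{TV}}$ that the paper quotes, which accounts for the factor $2$ under the sup-over-sets convention.
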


\begin{proof}
Fix $z\in\Zspace$. The centering condition $\pi g=0$ gives
\[
P^kg(z)=\int g(u)\,[P^k(z,\dd u)-\pi(\dd u)].
\]
Let $E^\ast$ be the dual space of $E$, with dual norm denoted by $\norm{\cdot}_{E^\ast}$. The dual representation of the norm yields
\begin{align*}
\norm{P^kg(z)}
&=\sup_{\substack{\ell\in E^\ast\\ \norm{\ell}_{E^\ast}\le1}}
\left|\int \ell(g(u))\,[P^k(z,\dd u)-\pi(\dd u)]\right|.
\end{align*}
For every functional in this supremum, $|\ell(g(u))|\le\norm{g}_\infty$. With the total-variation convention fixed above, integration against a bounded scalar function satisfies
\[
\left|\int q\,\dd(\mu-\nu)\right|
\le2\norm{q}_\infty\norm{\mu-\nu}_{\mathrm{TV}}.
\]
Consequently,
\[
\norm{P^kg(z)}
\le2\norm{g}_\infty
\norm{P^k(z,\cdot)-\pi}_{\mathrm{TV}}.
\]
Equation~\eqref{eq:uniform-ergodicity} now gives the stated bound uniformly in $z$.
\end{proof}

\begin{lemma}[Geometric coalescent coupling from uniform ergodicity]
\label{lem:uniform-coupling}
Under \eqref{eq:uniform-ergodicity}, there exist constants $C_{\mathrm{cpl}}<\infty$ and $\rho_{\mathrm{cpl}}\in(0,1)$, depending only on $C_Z$ and $\rho$, and a coupling kernel measurable in the initial pair $(z,z')\in\Zspace^2$. Under this kernel, the two $P$-chains have coalescence time
\[
\tau=\inf\{k\ge0:Z_j=Z'_j\text{ for every }j\ge k\}
\]
satisfying
\begin{equation}
\Pp_{z,z'}(\tau>k)
\le C_{\mathrm{cpl}}\rho_{\mathrm{cpl}}^k
\ind_{\{z\ne z'\}},
\qquad k\ge0.
\label{eq:derived-coupling}
\end{equation}
Here $\Pp_{z,z'}$ denotes probability under the coupling kernel started from $(z,z')$. The kernel can be chosen Markovian at the boundaries of fixed-length blocks, which is the structure used in the joint contraction argument.
\end{lemma}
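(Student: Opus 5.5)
Since the statement is only about the chain $(Z_k)$, I will not build a step-by-step maximal coupling on the general Polish space. Instead I will derive a Doeblin minorization from \eqref{eq:uniform-ergodicity} and use a split-chain (Nummelin) coupling over blocks of fixed length.

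\textbf{Step 1: a Doeblin condition.} Choose $k_0$ with $C_Z\rho^{k_0}\le 1/4$. Then $\norm{P^{k_0}(z,\cdot)-P^{k_0}(z',\cdot)}_{\mathrm{TV}}\le 1/2$ for all $z,z'$. The measure $\mu_{z,z'}=P^{k_0}(z,\cdot)\wedge P^{k_0}(z',\cdot)$ therefore has mass at least $1/2$. A uniform lower bound of this pairwise form is all that the coupling needs. Alternatively, one can take $P^{k_0}(z,\cdot)\ge \tfrac34\pi$ on the part where the densities with respect to $\pi$ are large, but the pairwise overlap is simpler.

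\textbf{Step 2: the block coupling.} Work on blocks of length $k_0$. At the start of a block, with current states $(z,z')$:
\begin{itemize}
\item if $z=z'$, move both chains together along a single $P$-path;
\item if $z\ne z'$, draw the endpoint pair $(Z_{k_0},Z'_{k_0})$ from a maximal coupling of $P^{k_0}(z,\cdot)$ and $P^{k_0}(z',\cdot)$, so that the endpoints agree with probability at least $1/2$;
\item fill in the intermediate states with the conditional bridge laws of the chain given its endpoints. Use a common bridge once the endpoints agree, and after that run the two chains identically.
\end{itemize}

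Measurability of the maximal coupling and of the bridges in $(z,z')$ follows from standard constructions on Polish spaces. For the maximal coupling, use densities with respect to $P^{k_0}(z,\cdot)+P^{k_0}(z',\cdot)$, which are jointly measurable by a Doob-type measurable Radon–Nikodym theorem. For the bridges, use regular conditional distributions.

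Each marginal is a $P$-chain, because the endpoint has the correct $k_0$-step law and the bridge restores the correct path law. By construction the coupling is Markovian at block boundaries, which is the structural claim in the statement.

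\textbf{Step 3: the tail bound.} If the chains start equal, then $\tau=0$, which accounts for the indicator $\ind_{\{z\ne z'\}}$. Otherwise, the Markov property at block boundaries gives $\Pp(\tau>jk_0)\le 2^{-j}$. Setting $\rho_{\mathrm{cpl}}=2^{-1/k_0}$ and $C_{\mathrm{cpl}}=2$ yields \eqref{eq:derived-coupling}, and these constants depend only on $C_Z$ and $\rho$.

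\textbf{Main obstacle.} The delicate part is the measurable selection, in the initial pair, of both the maximal coupling and the bridge kernels on a general Polish space. I would handle it via jointly measurable densities and disintegration. As a fallback, a coupling that is merely a valid $P$-chain coupling at block boundaries, with intermediate states generated independently given the endpoints, would still suffice, since coalescence is only needed from some block boundary onward.
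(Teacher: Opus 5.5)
Your proposal is correct and follows essentially the same route as the paper. You use the same block length ($2C_Z\rho^{k_0}\le 1/2$), a measurable maximal coupling of the $k_0$-step laws at block boundaries, bridge-filled interiors, synchronous evolution after meeting, and the same constants $C_{\mathrm{cpl}}=2$, $\rho_{\mathrm{cpl}}=2^{-1/k_0}$. The Doeblin/split-chain framing is only nominal: you never use a $z$-independent minorizing measure, and the pairwise overlap of mass at least $1/2$ is exactly what the paper's proof uses.
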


\begin{proof}
The construction uses the $r$-step skeleton of the chain. Choose an integer $r\ge1$ such that
\[
2C_Z\rho^r\le\frac12.
\]
For any two initial states $z,z'\in\Zspace$, the triangle inequality and uniform ergodicity imply
\begin{align*}
\norm{P^r(z,\cdot)-P^r(z',\cdot)}_{\mathrm{TV}}
&\le \norm{P^r(z,\cdot)-\pi}_{\mathrm{TV}}
   +\norm{P^r(z',\cdot)-\pi}_{\mathrm{TV}}\\
&\le2C_Z\rho^r
\le\frac12.
\end{align*}
Thus the two $r$-step transition laws overlap uniformly. At each block boundary, use a maximal coupling for the two endpoints at the next boundary. Their conditional disagreement probability is at most $1/2$. On a Polish space, the maximal coupling can be selected measurably in the initial pair \citep{Griffeath1975,DoucMoulinesPriouretSoulier2018,MeynTweedie2009}.

Apply this endpoint coupling successively at times $0,r,2r,\ldots$. The iterate recursion also uses the intermediate states inside each block. To retain the correct path law, sample those states from the regular conditional law of a Markov path given its initial and terminal states. Such bridge laws exist on a Polish space \citep{Faden1985}. Once the two endpoints agree at a block boundary, use the same transitions in both coordinates thereafter. The two paths then remain equal.

If $z=z'$, this synchronous construction starts at time zero. If $z\ne z'$, every block before coalescence has conditional meeting probability at least $1/2$. Therefore
\[
\Pp_{z,z'}(\tau>jr)\le2^{-j},
\qquad j\ge0.
\]
For an arbitrary integer $k\ge0$, the event $\{\tau>k\}$ is contained in the event that the chains have not coalesced by the preceding block boundary. Hence
\[
\Pp_{z,z'}(\tau>k)
\le2^{-\lfloor k/r\rfloor}
\le2\bigl(2^{-1/r}\bigr)^k.
\]
This proves \eqref{eq:derived-coupling} with
$C_{\mathrm{cpl}}=2$ and $\rho_{\mathrm{cpl}}=2^{-1/r}$.
\end{proof}

\begin{lemma}[Absolute covariance summability]
Let $\Gamma_\ell$ be the stationary covariance matrices defined in \eqref{eq:Gamma-def}. Under \eqref{eq:uniform-ergodicity}--\eqref{eq:xi-ass},
\begin{equation}
\norm{\Gamma_\ell}_*
\le2b^2C_Z\rho^\ell,
\qquad \ell\ge1.
\end{equation}
Consequently, the constant $C_\Gamma$ in \eqref{eq:Cgamma} is finite, and the long-run covariance series in \eqref{eq:long-run-covariance} converges absolutely in nuclear norm.
\end{lemma}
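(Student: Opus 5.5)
The bound $\norm{\Gamma_\ell}_*\le 2b^2C_Z\rho^\ell$ will follow by conditioning on $Z_0$ and applying Lemma~\ref{lem:mixing-decay} to the matrix-valued observable. For $\ell\ge1$, the Markov property under the stationary chain gives
\[
\Gamma_\ell=\E_\pi\bigl[\xi(Z_0)\,\E[\xi(Z_\ell)\mid Z_0]^\top\bigr]=\E_\pi\bigl[\xi(Z_0)\,(P^\ell\xi)(Z_0)^\top\bigr].
\]
Take $E=\R^d$ with the Euclidean norm and $g=\xi$. This is bounded by $b$ and centered by \eqref{eq:xi-ass}, so Lemma~\ref{lem:mixing-decay} yields
\[
\sup_z\norm{P^\ell\xi(z)}\le 2C_Z\rho^\ell b .
\]

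Next I bound the nuclear norm of the rank-one matrix inside the expectation. For vectors $u,v$, the nuclear norm of $uv^\top$ equals $\norm{u}\norm{v}$. Convexity of the norm (Jensen, or the triangle inequality for Bochner integrals) then gives
\[
\norm{\Gamma_\ell}_*\le\E_\pi\bigl[\norm{\xi(Z_0)}\,\norm{P^\ell\xi(Z_0)}\bigr]\le b\cdot 2C_Z\rho^\ell b=2b^2C_Z\rho^\ell .
\]
This is exactly the stated bound.

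For the consequences, note that $\Gamma_{-\ell}=\Gamma_\ell^\top$ has the same nuclear norm as $\Gamma_\ell$. Also $\norm{\Gamma_0}_*\le\E\norm{\xi(Z_0)}^2\le b^2$. Summing the geometric series gives
\[
C_\Gamma\le b^2+\frac{4b^2C_Z\rho}{1-\rho}<\infty .
\]
Absolute convergence in nuclear norm of $\sum_{\ell\in\mathbb Z}\Gamma_\ell$ follows. Grouping the $\pm\ell$ terms recovers \eqref{eq:long-run-covariance}, since $\E[\xi(Z_\ell)\xi(Z_0)^\top]=\Gamma_\ell^\top$.

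There is no real obstacle here. The only points needing care are justifying the conditional-expectation step through the Markov property under stationarity, and using the rank-one nuclear-norm identity together with convexity, rather than a cruder entrywise bound that would introduce $d$-dependence.
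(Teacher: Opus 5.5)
Your proposal is correct and follows the paper's proof essentially step for step: the Markov-property representation $\Gamma_\ell=\E_\pi[\xi(Z_0)(P^\ell\xi)(Z_0)^\top]$, Lemma~\ref{lem:mixing-decay} applied to $\xi$, the rank-one nuclear-norm identity, and transposition invariance for negative lags. Your explicit bound $C_\Gamma\le b^2+4b^2C_Z\rho/(1-\rho)$ is a harmless refinement of the paper's remark that $\Gamma_0$ is finite.
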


\begin{proof}
By the Markov property and stationarity,
\[
\Gamma_\ell
=\E_\pi\!\left[\xi(Z_0)(P^\ell\xi)(Z_0)^\top\right].
\]
For a rank-one matrix $uv^\top$, the nuclear norm is $\norm{u}\norm{v}$. Lemma~\ref{lem:mixing-decay}, applied to the centered function $\xi$, therefore gives
\begin{align*}
\norm{\Gamma_\ell}_*
&\le\E_\pi\!\left[\norm{\xi(Z_0)}\norm{P^\ell\xi(Z_0)}\right]\\
&\le b\cdot 2C_Z\rho^\ell b
=2b^2C_Z\rho^\ell.
\end{align*}
The term $\Gamma_0$ is finite because $\xi$ is bounded. Since $\Gamma_{-\ell}=\Gamma_\ell^\top$ and the nuclear norm is invariant under transposition, the same geometric estimate holds for negative lags. Summing over $\ell\in\mathbb Z$ proves \eqref{eq:Cgamma}.
\end{proof}

\begin{proof}[Proof of Lemma~\ref{lem:poisson}]
The proof follows the two Poisson equations in the order in which they are used later.

\emph{Step 1: construct the noise corrector.}
Apply Lemma~\ref{lem:mixing-decay} to the centered function $\xi$. Since $P^j\xi(z)=\E[\xi(Z_j)\mid Z_0=z]$, the $j$th term represents the expected noise $j$ steps into the future. The series
\[
V(z)=\sum_{j=0}^{\infty}P^j\xi(z)
\]
converges uniformly because the $j$th term is bounded by $2bC_Z\rho^j$. In particular,
\begin{equation}
\norm{V}_\infty
\le \frac{2bC_Z}{1-\rho}
=:v_V.
\label{eq:v-bound}
\end{equation}
Uniform convergence permits termwise application of $P$, so
\[
V-PV
=\sum_{j=0}^{\infty}P^j\xi-\sum_{j=1}^{\infty}P^j\xi
=\xi.
\]

\emph{Step 2: obtain the martingale--coboundary decomposition.}
Define
\[
D_{k+1}=V(Z_{k+1})-PV(Z_k).
\]
The Markov property gives
\[
\E[V(Z_{k+1})\mid\cF_k]=PV(Z_k),
\]
and hence $\E[D_{k+1}\mid\cF_k]=0$. Also,
\[
D_{k+1}+V(Z_k)-V(Z_{k+1})
=V(Z_k)-PV(Z_k)
=\xi(Z_k).
\]
Thus the first identity in \eqref{eq:mart-cob} holds. Equation~\eqref{eq:v-bound} gives the uniform increment bound
\begin{equation}
\norm{D_{k+1}}\le2v_V
\qquad\text{almost surely}.
\label{eq:D-bound-app}
\end{equation}
A direct conditional second-moment calculation yields
\begin{align*}
\E[D_{k+1}D_{k+1}^\top\mid\cF_k]
&=P(VV^\top)(Z_k)-(PV)(Z_k)(PV)(Z_k)^\top\\
&=\cC(Z_k).
\end{align*}

\emph{Step 3: identify the mean conditional covariance.}
The absolute convergence in \eqref{eq:Cgamma} allows us to sum the lag covariances inside expectation. Since $V=\sum_{j\ge0}P^j\xi$, the Markov property gives
\[
\E_\pi[\xi(Z_0)V(Z_0)^\top]
=\sum_{j\ge0}\E_\pi[\xi(Z_0)(P^j\xi)(Z_0)^\top]
=\sum_{j\ge0}\Gamma_j,
\qquad
\E_\pi[V(Z_0)\xi(Z_0)^\top]
=\sum_{j\ge0}\Gamma_j^\top.
\]
The term $\Gamma_0$ appears in both sums. Therefore
\begin{equation}
\Sigma_M
=\E_\pi\!\left[
\xi V^\top+V\xi^\top-\xi\xi^\top
\right],
\end{equation}
where all functions in the expectation are evaluated at $Z_0$. Substitute $\xi=V-PV$ into the matrix inside the expectation. The cross terms cancel and give
\[
\xi V^\top+V\xi^\top-\xi\xi^\top
=VV^\top-(PV)(PV)^\top.
\]
Finally, invariance of $\pi$ implies
\[
\E_\pi[VV^\top]=\E_\pi[P(VV^\top)].
\]
Combining these identities with the definition of $\cC$ proves
\begin{equation}
\int\cC(z)\pi(\dd z)=\Sigma_M.
\label{eq:sigmam-C}
\end{equation}
This direct calculation is the reason the covariance in the Gaussian replacement is exactly the long-run covariance.

\emph{Step 4: construct the covariance corrector.}
The matrix $\cC(z)$ is positive semidefinite. By \eqref{eq:D-bound-app},
\[
\sup_z\norm{\cC(z)}_*
=\sup_z\tr(\cC(z))
\le4v_V^2.
\]
Thus $g:=\cC-\Sigma_M$ is a bounded matrix-valued function, and \eqref{eq:sigmam-C} shows that $\pi g=0$. Lemma~\ref{lem:mixing-decay}, applied to symmetric matrices with the nuclear norm, shows that
\[
W=\sum_{j=0}^{\infty}P^jg
\]
converges uniformly. Termwise subtraction gives $W-PW=g$, and
\[
\norm{W}_{\infty,*}
\le\frac{2C_Z}{1-\rho}\norm{\cC-\Sigma_M}_{\infty,*}
<\infty.
\]
This proves all claims.
\end{proof}

\section{Proof of the one-block Gaussian approximation}
\label{app:block-proof}
This appendix proves Proposition~\ref{prop:block-defect}. Fix $T>0$, restrict to $0<\alpha\le1$, let $n=\lceil T/\alpha\rceil$, and assume $(Y_0,Z_0)\sim\gamma\otimes\pi$. The comparison proceeds from the nonlinear recursion to its linearization, then to a weighted martingale sum, then to a Gaussian weighted sum, and finally to the target Gaussian law. Appendix~\ref{app:reader-guide} gives an informal derivation of the same path. Here every approximation is stated with the estimate used in the final triangle inequality.
\subsection{Linearization and finite-block moment bounds}

Set
\begin{equation}
A:=\Id_d-\alpha H,
\qquad
\bar Y_{k+1}=A\bar Y_k+\sqrt{\alpha}\,\xi(Z_k),
\qquad
\bar Y_0=Y_0.
\end{equation}
The matrix $A$ is the explicit Euler update for the linearized drift $-Hy$. To compare the two recursions, observe that
\[
\mathcal T_\alpha(y)-Ay
=-\sqrt{\alpha}\bigl[\nabla f(x_\star+\sqrt{\alpha}y)-H\sqrt{\alpha}y\bigr].
\]
The integral Taylor formula gives
\[
\nabla f(x_\star+\sqrt{\alpha}y)-H\sqrt{\alpha}y
=\int_0^1
\bigl[\nabla^2f(x_\star+t\sqrt{\alpha}y)-H\bigr]
\sqrt{\alpha}y\,\dd t.
\]
Using the Hessian Lipschitz condition and integrating $t$ over $[0,1]$ yields
\begin{equation}
\norm{\mathcal T_\alpha(y)-Ay}
\le\frac{M}{2}\alpha^{3/2}\norm{y}^2.
\label{eq:linearization-local}
\end{equation}

The linear recursion has the explicit representation
\[
\bar Y_k=A^kY_0+\sqrt{\alpha}\sum_{i=0}^{k-1}A^{k-1-i}\xi(Z_i).
\]
The two terms are centered and independent because $Y_0$ is independent of the stationary Markov chain. For $\alpha\le L^{-1}$, the symmetric matrix $A$ has spectrum in $[0,1]$, so $\norm{A^j}_{\op}\le1$. The next calculation shows that the weighted Markov sum has a second moment bounded uniformly over a fixed block. Using \eqref{eq:Gamma-def}, stationarity gives
\begin{align}
\E\norm{\bar Y_k}^2
&=\tr(A^k\Sigma A^k)
 +\alpha\sum_{i,j=0}^{k-1}
 \tr\!\left(A^{k-1-i}\Gamma_{j-i}A^{k-1-j}\right).
\end{align}
Stationarity expresses all cross terms through the lag covariances $\Gamma_{j-i}$, whose absolute summability controls their total contribution. The first term is at most $\tr(\Sigma)$. For the second term, cyclicity of the trace and operator--nuclear norm duality give
\[
\left|\tr\!\left(A^{k-1-i}\Gamma_{j-i}A^{k-1-j}\right)\right|
\le\norm{\Gamma_{j-i}}_*.
\]
For each $i$, the sum over $j$ is bounded by $C_\Gamma$. Therefore
\begin{equation}
\sup_{0\le k\le n}\E\norm{\bar Y_k}^2
\le \tr(\Sigma)+(T+1)C_\Gamma
=:C_{\mathrm{mom}}(T),
\label{eq:linear-second}
\end{equation}
where we used $\alpha n\le T+1$.

Construct $Y_k$ and $\bar Y_k$ with the same initial condition and the same Markov path. Add and subtract $\mathcal T_\alpha(\bar Y_k)$, use the contraction \eqref{eq:T-contract}, and apply \eqref{eq:linearization-local} at $\bar Y_k$:
\[
\norm{Y_{k+1}-\bar Y_{k+1}}
\le(1-m\alpha)\norm{Y_k-\bar Y_k}
+\frac{M}{2}\alpha^{3/2}\norm{\bar Y_k}^2.
\]
Iteration from $Y_0=\bar Y_0$ gives
\begin{align}
\E\norm{Y_n-\bar Y_n}
&\le\frac{M}{2}\alpha^{3/2}
\sum_{k=0}^{n-1}(1-m\alpha)^{n-1-k}\E\norm{\bar Y_k}^2\notag\\
&\le\frac{M C_{\mathrm{mom}}(T)}{2m}\sqrt{\alpha}.
\label{eq:nonlin-linear}
\end{align}
The final bound uses \eqref{eq:linear-second} and the geometric-series estimate
$\sum_{j\ge0}(1-m\alpha)^j=(m\alpha)^{-1}$. Thus the nonlinear-to-linear comparison contributes $O(\sqrt{\alpha})$ over one block.

\subsection{Weighted martingale--coboundary decomposition}

Insert \eqref{eq:mart-cob} into the explicit solution of the linear recursion. With the reindexing $r=i+1$, define
\begin{equation}
B_r=\sqrt{\alpha}\,A^{n-r},
\qquad r=1,\ldots,n.
\label{eq:B-def}
\end{equation}
The coboundary terms can be summed exactly. This is the matrix-weighted version of the elementary identity $\sum_r(v_{r-1}-v_r)=v_0-v_n$. Indeed,
\begin{align}
&\sum_{r=1}^nB_r\bigl[V(Z_{r-1})-V(Z_r)\bigr]\notag\\
&\quad=B_1V(Z_0)-B_nV(Z_n)
+\sum_{r=1}^{n-1}(B_{r+1}-B_r)V(Z_r).
\label{eq:weighted-summation-parts}
\end{align}
Here $B_1=\sqrt\alpha A^{n-1}$, $B_n=\sqrt\alpha\Id_d$, and
\[
B_{r+1}-B_r
=\alpha^{3/2}A^{n-r-1}H.
\]
Substitution into the linear recursion gives the exact decomposition
\begin{equation}
\bar Y_n=A^nY_0+\sum_{r=1}^nB_rD_r+R_{V,n},
\label{eq:L-decomp}
\end{equation}
where
\begin{equation}
R_{V,n}=\sqrt{\alpha}\left[
A^{n-1}V(Z_0)-V(Z_n)
+\alpha\sum_{j=1}^{n-1}A^{n-1-j}HV(Z_j)
\right].
\end{equation}
The first two terms are the endpoints of the coboundary sum. The final term is the accumulated variation of the matrix weights in \eqref{eq:weighted-summation-parts}. Since $\norm{A^j}_{\op}\le1$, $\norm{H}_{\op}\le L$, and $\alpha n\le T+1$, we obtain
\begin{equation}
\norm{R_{V,n}}
\le v_V\bigl(2+L(T+1)\bigr)\sqrt{\alpha}.
\label{eq:cob-bound}
\end{equation}
Consequently, replacing $\bar Y_n$ by the weighted martingale sum in \eqref{eq:L-decomp} costs $O(\sqrt{\alpha})$ in Wasserstein distance.

\subsection{Gaussian smoothing for Lipschitz test functions}

Let $N_1,\ldots,N_n$ be independent $\cN(0,\Sigma_M)$ vectors, independent of $Y_0$ and the Markov chain. Set $\beta_1=\E\norm{N_1}$ and $\beta_3=\E\norm{N_1}^3$. Both quantities are finite and depend only on $\Sigma_M$. We compare
\[
A^nY_0+\sum_{r=1}^nB_rD_r
\qquad\text{with}\qquad
A^nY_0+\sum_{r=1}^nB_rN_r.
\]
The propagated initial condition provides the smoothing needed for this comparison. Since $Y_0\sim\cN(0,\Sigma)$, the covariance of $A^nY_0$ is $A^n\Sigma A^n$. Restrict $\alpha\le(2L)^{-1}$. Then $\lambda_{\min}(A)\ge1-\alpha L$ and $\log(1-\alpha L)\ge-2\alpha L$. Since $\alpha n\le T+1$,
\begin{equation}
A^n\Sigma A^n
\succeq \lambda_{\min}(\Sigma)e^{-4L(T+1)}\Id_d
=:c_T\Id_d.
\end{equation}
Thus the smoothing covariance remains uniformly nondegenerate over the block. For a $1$-Lipschitz test function $h$, define
\[
\psi(u)=\E[h(A^nY_0+u)].
\]
Gaussian convolution makes $\psi$ twice differentiable with a Lipschitz Hessian. Appendix~\ref{app:smoothing} proves the uniform bounds
\begin{equation}
\sup_u\norm{\nabla^2\psi(u)}_{\op}
\le a_{2,T},
\qquad
\norm{\nabla^2\psi(u)-\nabla^2\psi(v)}_{\op}
\le a_{3,T}\norm{u-v},
\label{eq:smoothing-bounds}
\end{equation}
where one may take $a_{2,T}=a_2(c_T)$ and $a_{3,T}=a_3(c_T)$ with the functions defined in Appendix~\ref{app:smoothing}. These constants are independent of $\alpha$.

\subsection{Lindeberg replacement}
For $r=1,\ldots,n$, define the Lindeberg hybrid
\begin{equation}
U_r=\sum_{i=1}^{r-1}B_iD_i+\sum_{i=r+1}^{n}B_iN_i.
\end{equation}
Thus the increments before position $r$ are martingale increments, those after position $r$ are Gaussian, and the $r$th position is left open. Empty sums are zero. Replacing the open position and summing over $r$ gives the Lindeberg telescoping identity \citep{Chatterjee2006}
\begin{align}
&\E\psi\!\left(\sum_{i=1}^nB_iD_i\right)
-\E\psi\!\left(\sum_{i=1}^nB_iN_i\right)\notag\\
&\quad=\sum_{r=1}^n\left[
\E\psi(U_r+B_rD_r)-\E\psi(U_r+B_rN_r)
\right].
\end{align}
Indeed, $U_r+B_rD_r$ is the hybrid with martingale increments through position $r$, while $U_r+B_rN_r$ is the hybrid with martingale increments only through position $r-1$. Consecutive hybrids therefore cancel in the sum.
For the $r$th summand, condition on
\[
\mathcal G_r=\cF_{r-1}\vee\sigma(N_i:r<i\le n).
\]
This $\sigma$-field contains every random variable already present in $U_r$, so $U_r$ is $\mathcal G_r$-measurable. Adding the independent future Gaussian variables does not change conditional moments of $D_r$. The martingale property therefore gives
\[
\E[D_r\mid\mathcal G_r]=0,
\qquad
\E[D_rD_r^\top\mid\mathcal G_r]=\cC(Z_{r-1}),
\]
while $N_r$ is conditionally centered with covariance $\Sigma_M$.

Apply the second-order Taylor formula for $\psi$ at $U_r$. For any increment $x\in\R^d$, the Hessian-Lipschitz bound in \eqref{eq:smoothing-bounds} gives
\begin{equation}
\left|
\psi(U_r+x)-\psi(U_r)-\ip{\nabla\psi(U_r)}{x}
-\frac12x^\top\nabla^2\psi(U_r)x
\right|
\le\frac{a_{3,T}}{6}\norm{x}^3.
\label{eq:taylor-lindeberg}
\end{equation}
Use \eqref{eq:taylor-lindeberg} first with $x=B_rD_r$ and then with $x=B_rN_r$. Conditional on $\mathcal G_r$, both increments are centered, so the linear terms vanish. Writing
\[
K_r=B_r^\top\nabla^2\psi(U_r)B_r,
\]
the difference of the quadratic terms is
\begin{align}
&\E\psi(U_r+B_rD_r)-\E\psi(U_r+B_rN_r)\notag\\
&\quad=\frac12\E\frobip{K_r}{\cC(Z_{r-1})-\Sigma_M}+\mathcal R_r.
\label{eq:lindeberg-one}
\end{align}
By \eqref{eq:D-def} and boundedness of $V$, $\norm{D_r}\le2v_V$, while \eqref{eq:B-def} gives $\norm{B_r}_{\op}\le\sqrt\alpha$. The two Taylor remainders therefore satisfy the explicit bound
\begin{equation}
|\mathcal R_r|
\le\frac{a_{3,T}}{6}\alpha^{3/2}
\left((2v_V)^3+\beta_3\right).
\label{eq:lindeberg-remainder-one}
\end{equation}
Since $n\le T/\alpha+1$, summing \eqref{eq:lindeberg-remainder-one} gives
\begin{equation}
\sum_{r=1}^n|\mathcal R_r|
\le C_T n\alpha^{3/2}
\le C_T\sqrt{\alpha}.
\label{eq:lindeberg-third}
\end{equation}

\subsection{Predictable covariance correction}
We next sum the predictable covariance errors. By \eqref{eq:mart-cob},
\[
\cC(Z_{r-1})-\Sigma_M=W(Z_{r-1})-PW(Z_{r-1}).
\]
The key point is that $K_r$ is $\mathcal G_r$-measurable even though it remains correlated with $Z_{r-1}$. Thus $K_r$ is predictable for the transition from $Z_{r-1}$ to $Z_r$. The Markov property and conditional expectation give
\begin{align*}
\E\frobip{K_r}{W(Z_r)}
&=\E\!\left[\E\!\left[\frobip{K_r}{W(Z_r)}\mid\mathcal G_r\right]\right]\\
&=\E\frobip{K_r}{PW(Z_{r-1})}.
\end{align*}
Thus the correlation between $K_r$ and the current Markov state is retained inside an exact telescoping identity. More precisely, each summand equals
\[
\E\frobip{K_r}{W(Z_{r-1})}
-\E\frobip{K_r}{W(Z_r)}.
\]
After summing over $r$, the first sum is indexed by $Z_0,\ldots,Z_{n-1}$ and the second by $Z_1,\ldots,Z_n$. Pairing the common interior states gives
\begin{align}
\sum_{r=1}^n\E\frobip{K_r}{\cC(Z_{r-1})-\Sigma_M}
&=\E\frobip{K_1}{W(Z_0)}-\E\frobip{K_n}{W(Z_n)}\notag\\
&\quad+\sum_{r=1}^{n-1}\E\frobip{K_{r+1}-K_r}{W(Z_r)}.
\label{eq:cov-telescope}
\end{align}
The boundary terms can be bounded directly. By \eqref{eq:smoothing-bounds} and \eqref{eq:B-def},
\[
\norm{K_r}_{\op}
\le a_{2,T}\norm{B_r}_{\op}^2
\le a_{2,T}\alpha.
\]
Operator--nuclear norm duality therefore gives
\begin{equation}
\left|\E\frobip{K_1}{W(Z_0)}\right|
+\left|\E\frobip{K_n}{W(Z_n)}\right|
\le2a_{2,T}\norm{W}_{\infty,*}\alpha.
\end{equation}

For the variation terms, the weights and hybrids satisfy, for $r=1,\ldots,n-1$,
\begin{equation}
\norm{B_{r+1}-B_r}_{\op}\le L\alpha^{3/2},
\qquad
U_{r+1}-U_r=B_rD_r-B_{r+1}N_{r+1}.
\label{eq:weight-hybrid-increments}
\end{equation}
Write $J_r=\nabla^2\psi(U_r)$. Adding and subtracting $B_r^\top J_{r+1}B_{r+1}$ and $B_r^\top J_rB_{r+1}$ gives
\begin{align*}
K_{r+1}-K_r
&=(B_{r+1}-B_r)^\top J_{r+1}B_{r+1}\\
&\quad+B_r^\top(J_{r+1}-J_r)B_{r+1}\\
&\quad+B_r^\top J_r(B_{r+1}-B_r).
\end{align*}
The first and third terms measure the change in the weights. The middle term measures the change in the smoothed Hessian along the hybrid increment. Using \eqref{eq:smoothing-bounds} and \eqref{eq:weight-hybrid-increments},
\begin{align*}
\E\norm{K_{r+1}-K_r}_{\op}
&\le 2a_{2,T}L\alpha^2
 +a_{3,T}\alpha\E\norm{U_{r+1}-U_r}\\
&\le 2a_{2,T}L\alpha^2
 +a_{3,T}(2v_V+\beta_1)\alpha^{3/2}.
\end{align*}
Summing over $r=1,\ldots,n-1$ gives
\begin{equation}
\sum_{r=1}^{n-1}\E\norm{K_{r+1}-K_r}_{\op}
\le C_T\sqrt{\alpha}.
\end{equation}
Multiplying the variation bound by $\norm{W}_{\infty,*}$ and combining it with the two boundary terms shows that the right-hand side of \eqref{eq:cov-telescope} is $O(\sqrt{\alpha})$. Summing \eqref{eq:lindeberg-one}, using \eqref{eq:lindeberg-third}, and taking the supremum over all $1$-Lipschitz $h$ yields
\begin{equation}
\Wone\!\left(
\Law\!\left(A^nY_0+\sum_{r=1}^nB_rD_r\right),
\Law\!\left(A^nY_0+\sum_{r=1}^nB_rN_r\right)
\right)
\le C_T\sqrt{\alpha}.
\label{eq:mart-gauss-main}
\end{equation}

\subsection{Identification of the target covariance}
The Gaussian vector in the second law has covariance
\begin{equation}
\Sigma_{\alpha,n}
=A^n\Sigma A^n
+\alpha\sum_{j=0}^{n-1}A^j\Sigma_MA^j.
\end{equation}
Using $\Sigma_M=H\Sigma+\Sigma H$ and $A=\Id_d-\alpha H$ gives
\[
A\Sigma A+\alpha\Sigma_M
=\Sigma+\alpha^2H\Sigma H.
\]
Rearrange this identity as
$A\Sigma A-\Sigma=-\alpha\Sigma_M+\alpha^2H\Sigma H$.
Multiplying on both sides by $A^j$ and summing over $j=0,\ldots,n-1$ makes the covariance terms telescope and gives
\begin{equation}
\Sigma_{\alpha,n}-\Sigma
=\alpha^2\sum_{j=0}^{n-1}A^jH\Sigma H A^j.
\label{eq:cov-identity}
\end{equation}
Since $\norm{A^j}_{\op}\le1$ and $n\alpha\le T+1$,
\[
\norm{\Sigma_{\alpha,n}-\Sigma}_F
\le (T+1)\alpha\norm{H\Sigma H}_F.
\]
Moreover, \eqref{eq:cov-identity} shows that $\Sigma_{\alpha,n}\succeq\Sigma$. Hence both covariance matrices are bounded below by $\lambda_{\min}(\Sigma)\Id_d$. Coupling the two Gaussian vectors through the same standard normal vector and applying Lemma~\ref{lem:gaussian-cov} give
\begin{equation}
\Wone\bigl(\cN(0,\Sigma_{\alpha,n}),\cN(0,\Sigma)\bigr)
\le C_T\alpha.
\label{eq:gauss-cov-bound}
\end{equation}

Finally, apply the triangle inequality along the comparison chain displayed at the beginning of the section. The nonlinear-to-linear term is bounded by \eqref{eq:nonlin-linear}, the coboundary term by \eqref{eq:cob-bound}, the martingale-to-Gaussian term by \eqref{eq:mart-gauss-main}, and the covariance term by \eqref{eq:gauss-cov-bound}. Therefore
\begin{align*}
\Wone(\Law(Y_n),\gamma)
&\le C_T\sqrt\alpha+C_T\alpha\\
&\le C'_T\sqrt\alpha,
\end{align*}
where the last inequality uses $\alpha\le1$. This proves Proposition~\ref{prop:block-defect}.

\section{Joint contraction and construction of the invariant law}
\label{app:joint-proof}

This section proves Proposition~\ref{prop:block-contract}, constructs the invariant law, and completes the proof of Theorem~\ref{thm:main}. The coupling first brings the two Markov paths together. Strong convexity then contracts the iterate coordinates because both recursions receive the same noise after coalescence.

\subsection{Contraction of the joint block kernel}

Fix two initial states $(y,z)$ and $(y',z')$. Lemma~\ref{lem:uniform-coupling} supplies a coupling of the two $P$-chains with coalescence time $\tau$. Write $\Pp_{z,z'}$ and $\E_{z,z'}$ for probability and expectation under this coupling when it starts from $(z,z')$. Expectations in this subsection are taken under that coupling. Then
\[
\Pp_{z,z'}(\tau>k)
\le C_{\mathrm{cpl}}\rho_{\mathrm{cpl}}^k
\ind_{\{z\ne z'\}}.
\]
Use the coupled Markov paths to drive the two scaled recursions. Set
\[
a_\alpha=1-m\alpha,
\qquad
\Delta_k=\norm{Y_k-Y'_k}.
\]
Before coalescence, the two scaled noise increments differ by at most $2b\sqrt{\alpha}$. After coalescence, the increments agree. Combining this observation with \eqref{eq:T-contract} gives the pathwise inequality
\begin{equation}
\Delta_{k+1}
\le a_\alpha\Delta_k
+2b\sqrt{\alpha}\ind_{\{\tau>k\}}.
\label{eq:Delta-rec}
\end{equation}
Iterating \eqref{eq:Delta-rec} and taking expectations yields
\begin{align}
\E\Delta_n
&\le a_\alpha^n\Delta_0
+2bC_{\mathrm{cpl}}\sqrt{\alpha}
\sum_{k=0}^{n-1}a_\alpha^{n-1-k}\rho_{\mathrm{cpl}}^k
\ind_{\{z\ne z'\}}.\notag
\end{align}
Since $a_\alpha\to1$ and $\rho_{\mathrm{cpl}}<1$, restrict $\alpha$ further so that
$a_\alpha-\rho_{\mathrm{cpl}}\ge(1-\rho_{\mathrm{cpl}})/2$. The geometric convolution can then be evaluated exactly:
\[
\sum_{k=0}^{n-1}a_\alpha^{n-1-k}\rho_{\mathrm{cpl}}^k
=\frac{a_\alpha^n-\rho_{\mathrm{cpl}}^n}
{a_\alpha-\rho_{\mathrm{cpl}}}
\le\frac{2a_\alpha^n}{1-\rho_{\mathrm{cpl}}}.
\]
Consequently,
\begin{equation}
\E\Delta_n
\le a_\alpha^n\Delta_0
+\frac{4bC_{\mathrm{cpl}}}{1-\rho_{\mathrm{cpl}}}
 a_\alpha^n\sqrt{\alpha}\ind_{\{z\ne z'\}}.
\label{eq:Delta-block}
\end{equation}
The same coupling controls the terminal Markov-state discrepancy:
\begin{equation}
\Pp(Z_n\ne Z'_n)
\le C_{\mathrm{cpl}}\rho_{\mathrm{cpl}}^n
\ind_{\{z\ne z'\}}.
\label{eq:Z-block}
\end{equation}

Choose $T>0$ so that $e^{-mT}\le1/4$ and take $n=\lceil T/\alpha\rceil$. Since $n\alpha\ge T$,
\[
a_\alpha^n\le e^{-m\alpha n}\le e^{-mT}\le\frac14.
\]
Next choose
\[
\kappa\ge\frac{4bC_{\mathrm{cpl}}}{1-\rho_{\mathrm{cpl}}},
\]
and restrict $\alpha$ so that
$C_{\mathrm{cpl}}\rho_{\mathrm{cpl}}^{\lceil T/\alpha\rceil}\le1/4$.
The initial iterate discrepancy is then multiplied by at most $1/4$. If the initial Markov states differ, the noise-mismatch term contributes at most $\kappa\sqrt\alpha/4$, and the terminal state mismatch contributes at most another $\kappa\sqrt\alpha/4$. Therefore \eqref{eq:Delta-block} and \eqref{eq:Z-block} give
\begin{equation}
\E d_{\alpha,\kappa}((Y_n,Z_n),(Y'_n,Z'_n))
\le\frac12d_{\alpha,\kappa}((Y_0,Z_0),(Y'_0,Z'_0)).
\label{eq:point-contract}
\end{equation}
To pass from point states to probability measures, let $h$ be an admissible joint Kantorovich--Rubinstein test. Coupling the two copies from point states $p,p'$ and using \eqref{eq:point-contract} gives
\[
|(\Palpha^n h)(p)-(\Palpha^n h)(p')|
\le\frac12d_{\alpha,\kappa}(p,p').
\]
For a fixed base point $p_0$, the function
$2[\Palpha^n h-(\Palpha^n h)(p_0)]$ is therefore admissible. The subtraction of the constant does not change its integral against $\lambda-\lambda'$. Integrating and taking the supremum over $h$ proves Proposition~\ref{prop:block-contract} with $q=1/2$.

\subsection{Construction and uniqueness of the invariant law}

Fix $0<\alpha\le L^{-1}$. Because $\Zspace$ is Polish and $\pi$ is invariant, a stationary $P$-chain with marginal $\pi$ has a two-sided realization $(Z_k)_{k\in\mathbb Z}$. Since $\mathcal T_\alpha(0)=0$, the contraction bound gives
\begin{equation}
\norm{\mathcal T_\alpha(y)+\sqrt{\alpha}\xi(z)}
\le(1-m\alpha)\norm{y}+b\sqrt{\alpha}.
\label{eq:absorbing-main}
\end{equation}
For each $N\ge1$, initialize the scaled recursion at time $-N$ from zero and denote its value at time zero by $Y_0^{(N)}$. Iterating \eqref{eq:absorbing-main} shows that every such trajectory stays in the deterministic ball of radius $b/(m\sqrt{\alpha})$.

If $N'>N$, the two trajectories use the same noise path from time $-N$ onward. At time $-N$, the trajectory started there equals zero, while the older trajectory lies in the absorbing ball. Their distance at that time is at most $b/(m\sqrt\alpha)$. Contractivity over the next $N$ updates gives
\[
\norm{Y_0^{(N')}-Y_0^{(N)}}
\le\frac{b}{m\sqrt{\alpha}}(1-m\alpha)^N.
\]
The right-hand side tends to zero, so $(Y_0^{(N)})$ is almost surely Cauchy. Let $Y_0$ denote its limit. This limit is measurable with respect to the infinite past of the noise chain.

Apply the same pullback construction after shifting the time origin. This defines $Y_k$ for every $k\in\mathbb Z$, and the shift invariance of the two-sided noise chain makes $(Y_k,Z_k)_{k\in\mathbb Z}$ stationary. Passing to the limit in the recursion shows that it satisfies \eqref{eq:scaled-rec} \citep{DiaconisFreedman1999,GuptaHaskell2021}. Hence $\varpi_\alpha:=\Law(Y_0,Z_0)$ is invariant for $\Palpha$ and has $Z$-marginal $\pi$.

The pullback law just constructed is supported in the deterministic ball
\begin{equation}
\norm{Y}\le\frac{b}{m\sqrt{\alpha}}
\qquad\text{almost surely}.
\label{eq:stationary-support-main}
\end{equation}
Indeed, the finite pullback trajectories stay in this ball by
\eqref{eq:absorbing-main}, and the ball is closed, so their almost-sure
limit has the same bound.

Now restrict further to $\alpha\le\alpha_c$, where $\alpha_c$ is supplied by Proposition~\ref{prop:block-contract}. Every invariant law has a finite first $Y$-moment. Indeed, iterating \eqref{eq:absorbing-main} gives
\begin{equation}
\norm{Y_k}
\le(1-m\alpha)^k\norm{Y_0}
+\frac{b}{m\sqrt\alpha}.
\end{equation}
If the law of $Y_0$ is invariant, then $Y_k$ has the same law as $Y_0$. For every $\varepsilon>0$, the preceding pathwise bound gives
\begin{align*}
\Pp\!\left(\norm{Y_0}>\frac{b}{m\sqrt\alpha}+\varepsilon\right)
&=\Pp\!\left(\norm{Y_k}>\frac{b}{m\sqrt\alpha}+\varepsilon\right)\\
&\le
\Pp\!\left(
\norm{Y_0}>
\varepsilon(1-m\alpha)^{-k}
\right).
\end{align*}
The threshold on the right tends to infinity, so the probability tends to zero. Thus every invariant law is supported in the same deterministic ball as \eqref{eq:stationary-support-main}. Its $Z$-marginal is $\pi$ because uniform ergodicity makes $\pi$ the unique invariant law of $P$.

Let $\mu_1$ and $\mu_2$ be two invariant laws of the joint recursion. Both have finite first $Y$-moments, so Proposition~\ref{prop:block-contract} applies:
\[
\Wjoint(\mu_1,\mu_2)
=\Wjoint(\mu_1\Palpha^n,\mu_2\Palpha^n)
\le q\Wjoint(\mu_1,\mu_2).
\]
The dual class contains suitably rescaled products of bounded Lipschitz functions of $y$ and bounded measurable functions of $z$, so it determines the joint law. Hence $q<1$ forces $\mu_1=\mu_2$, and the pullback law $\varpi_\alpha$ is the unique invariant law of the scaled recursion.

\subsection{Transfer of the block estimate to stationarity}

Recall $\nu=\gamma\otimes\pi$. Let $h$ be an admissible joint test and fix an arbitrary $z_0\in\Zspace$. The function $y\mapsto h(y,z_0)$ is $1$-Lipschitz. Moreover,
\[
|h(y,z)-h(y,z_0)|\le\kappa\sqrt\alpha
\qquad\text{for every }y,z.
\]
Both $\nu\Palpha^n$ and $\nu$ have $Z$-marginal $\pi$. Replacing $h(y,z)$ by $h(y,z_0)$ in each of the two expectations therefore costs at most $\kappa\sqrt\alpha$. Proposition~\ref{prop:block-defect} now gives
\begin{equation}
\Wjoint(\nu\Palpha^n,\nu)
\le(C_T+2\kappa)\sqrt\alpha.
\end{equation}

Set $w_\alpha=\Wjoint(\varpi_\alpha,\nu)$. Invariance of $\varpi_\alpha$, the triangle inequality, and Proposition~\ref{prop:block-contract} yield
\begin{align*}
w_\alpha
&=\Wjoint(\varpi_\alpha\Palpha^n,\nu)\\
&\le
\Wjoint(\varpi_\alpha\Palpha^n,\nu\Palpha^n)
+\Wjoint(\nu\Palpha^n,\nu)\\
&\le qw_\alpha+(C_T+2\kappa)\sqrt\alpha.
\end{align*}
Therefore
\begin{equation}
w_\alpha
\le\frac{C_T+2\kappa}{1-q}\sqrt\alpha.
\end{equation}
Projection onto the first coordinate can only decrease Wasserstein distance. Hence the $Y$-marginal of $\varpi_\alpha$ is within $C\sqrt\alpha$ of $\gamma$. The pushforward of $\varpi_\alpha$ under $(y,z)\mapsto(x_\star+\sqrt\alpha y,z)$ is the unique invariant probability measure $\Pi_\alpha$ of the original recursion. Scaling the iterate distance by $\sqrt\alpha$ also gives the unscaled $O(\alpha)$ bound. Taking $\alpha_\star$ to be the minimum of the finitely many positive thresholds introduced above completes the proof of Theorem~\ref{thm:main}.

\section{Gaussian smoothing and covariance perturbation}
\label{app:smoothing}

This appendix records the two analytic estimates used in the Lindeberg argument. Gaussian convolution turns a Lipschitz test function into a function of class $C^2$ with a quantitatively Lipschitz Hessian. A separate matrix perturbation estimate converts a covariance error into a Wasserstein-1 error between centered Gaussian laws.

\begin{lemma}[Derivative bounds for Gaussian convolution]
Let $S\succeq c\Id_d$ with $c>0$, let $G\sim\cN(0,S)$, and let $h:\R^d\to\R$ be $1$-Lipschitz. Define $\psi(x)=\E[h(x+G)]$. Then $\psi$ is of class $C^2$ and
\begin{align}
\sup_x\norm{\nabla^2\psi(x)}_{\op}
&\le a_2(c):=\sqrt{\frac2\pi}\,c^{-1/2},
\label{eq:a2}\\
\norm{\nabla^2\psi(x)-\nabla^2\psi(y)}_{\op}
&\le a_3(c)\norm{x-y},
\qquad a_3(c):=\sqrt2\,c^{-1}.
\label{eq:a3}
\end{align}
Consequently, for all $x,u\in\R^d$,
\begin{equation}
\left|\psi(x+u)-\psi(x)-\ip{\nabla\psi(x)}{u}
-\frac12u^\top\nabla^2\psi(x)u\right|
\le\frac{a_3(c)}6\norm{u}^3.
\label{eq:taylor-psi}
\end{equation}
\end{lemma}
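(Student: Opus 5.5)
The plan is to reduce to an isotropic Gaussian, obtain each derivative bound from Gaussian integration by parts against the density of that isotropic part, and then read off the Taylor bound \eqref{eq:taylor-psi} from the integral form of the remainder.

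\emph{Reduction.} Since $S\succeq c\Id_d$, write $S=c\Id_d+S'$ with $S'\succeq0$. Realize $G=\sqrt c\,Z+G'$ with $Z\sim\cN(0,\Id_d)$ and $G'\sim\cN(0,S')$ independent. Set $\phi(x)=\E[h(x+G')]$. This $\phi$ is again $1$-Lipschitz, and $\psi(x)=\E[\phi(x+\sqrt c\,Z)]$. It therefore suffices to prove the bounds for $S=c\Id_d$ and an arbitrary $1$-Lipschitz test function, which we still call $h$.

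\emph{Derivative formulas.} Write $\psi(x)=\int h(y)\,p_c(y-x)\dd y$, where $p_c$ is the $\cN(0,c\Id_d)$ density. Since $h$ grows at most linearly and every derivative of $p_c$ is a polynomial times a Gaussian, differentiation under the integral is justified by dominated convergence. This shows $\psi\in C^\infty$, and in particular $\psi\in C^2$. Differentiating the kernel gives
\[
\nabla\psi(x)=c^{-1}\E[h(x+G)G],
\qquad
\nabla^2\psi(x)=c^{-2}\E\bigl[h(x+G)(GG^\top-c\Id_d)\bigr].
\]
By Rademacher's theorem $h$ is differentiable almost everywhere with $\norm{\nabla h}\le1$. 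Hence we may move one derivative back onto $h$ in the form $\nabla^2\psi(x)=c^{-1}\E[\nabla h(x+G)G^\top]$. Equivalently, this is one Gaussian integration by parts applied to the first formula.

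\emph{Hessian bound \eqref{eq:a2}.} Take unit vectors $u,v$. Then
\[
|u^\top\nabla^2\psi(x)v|\le c^{-1}\E|\ip{G}{v}|=c^{-1}\sqrt c\sqrt{2/\pi}.
\]
Taking the supremum over $u,v$ gives $a_2(c)$.

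\emph{Hessian--Lipschitz bound \eqref{eq:a3}.} Differentiate the second formula once more in a unit direction $w$, moving the new derivative onto $h$:
\[
\partial_w\nabla^2\psi(x)=c^{-2}\E\bigl[\ip{\nabla h(x+G)}{w}(GG^\top-c\Id_d)\bigr].
\]
For unit $u,v$, Cauchy--Schwarz gives
\[
|u^\top\partial_w\nabla^2\psi(x)\,v|\le c^{-2}\bigl(\E[(\ip{G}{u}\ip{G}{v}-c\ip{u}{v})^2]\bigr)^{1/2}.
\]
By Isserlis' formula, $\E[\ip{G}{u}^2\ip{G}{v}^2]=c^2(1+2\ip{u}{v}^2)$, so the second moment above equals $c^2(1+\ip uv^2)\le2c^2$. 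The directional third derivative therefore has operator norm at most $\sqrt2/c$. The mean value inequality along the segment from $y$ to $x$ then yields \eqref{eq:a3}.

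\emph{Taylor bound.} For \eqref{eq:taylor-psi}, use the integral form of the remainder,
\[
\int_0^1(1-t)\,u^\top\bigl[\nabla^2\psi(x+tu)-\nabla^2\psi(x)\bigr]u\dd t .
\]
By \eqref{eq:a3}, the integrand is bounded by $a_3(c)\,t\,(1-t)\norm{u}^3$. Since $\int_0^1t(1-t)\dd t=1/6$, the remainder is at most $a_3(c)\norm{u}^3/6$.

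\emph{Expected obstacle.} The main difficulty is rigor in the derivative exchanges: differentiating under the integral, and the integration by parts that moves a derivative onto the merely Lipschitz $h$. Both are handled by working with the smooth kernel $p_c$ and applying dominated convergence, using the linear growth of $h$ and the almost-everywhere gradient bound $\norm{\nabla h}\le1$. The only nontrivial computation is the fourth-moment (Isserlis) identity that produces the constant $\sqrt2$.
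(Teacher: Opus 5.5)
Your proof is correct and is essentially the paper's argument: Gaussian integration by parts puts one derivative on $h$ for the Hessian bound. It then pairs $h$ with the centered quadratic for the Hessian-Lipschitz bound, where the same fourth-moment computation gives $\sqrt2$, and the integral Taylor remainder does the rest. The only differences are cosmetic: you first reduce to $S=c\Id_d$ by absorbing $\cN(0,S-c\Id_d)$ into $h$, and you bound a directional third derivative rather than comparing $h(x+G)-h(y+G)$ directly. The paper instead works with general $S$ through $S^{-1}$, and it uses mollification to justify the derivative exchanges.
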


\begin{proof}
For a $1$-Lipschitz function $h$, begin with a standard mollification $h_\delta$. Mollification preserves the Lipschitz constant, so $\norm{\nabla h_\delta}_\infty\le1$, and the estimates below are uniform in $\delta$. After subtracting the constant $h(0)$, these functions have at most linear growth. Dominated convergence in the Gaussian formulas permits $\delta\downarrow0$. It therefore suffices to prove the estimates for smooth $h$ with $\norm{\nabla h}_\infty\le1$.

Fix unit vectors $u,v\in\R^d$. Differentiating in direction $v$ and applying Gaussian integration by parts in direction $u$ gives
\begin{equation}
u^\top\nabla^2\psi(x)v
=\E\!\left[\partial_vh(x+G)\,u^\top S^{-1}G\right].
\end{equation}
The scalar $u^\top S^{-1}G$ is centered Gaussian with variance $u^\top S^{-1}u\le c^{-1}$. Since $|\partial_vh|\le1$,
\[
|u^\top\nabla^2\psi(x)v|
\le\E|u^\top S^{-1}G|
\le\sqrt{\frac2\pi}\,c^{-1/2}.
\]
Taking the supremum over unit $u$ and $v$ proves \eqref{eq:a2}.

A second Gaussian integration by parts gives
\begin{equation}
u^\top\nabla^2\psi(x)v
=\E[h(x+G)Q_{u,v}(G)],
\label{eq:ibp2}
\end{equation}
where
\[
Q_{u,v}(G)
=(u^\top S^{-1}G)(v^\top S^{-1}G)-u^\top S^{-1}v.
\]
The random variable $Q_{u,v}(G)$ is centered. Subtracting \eqref{eq:ibp2} at $x$ and $y$, then using the Lipschitz property of $h$, yields
\begin{align*}
|u^\top(\nabla^2\psi(x)-\nabla^2\psi(y))v|
&\le\norm{x-y}\E|Q_{u,v}(G)|\\
&\le\norm{x-y}(\E Q_{u,v}(G)^2)^{1/2}.
\end{align*}
Write $G=S^{1/2}G_0$ with $G_0\sim\cN(0,\Id_d)$. The Gaussian fourth-moment identity gives
\[
\E Q_{u,v}(G)^2
=(u^\top S^{-1}u)(v^\top S^{-1}v)
 +(u^\top S^{-1}v)^2
\le2c^{-2}.
\]
Taking the supremum over unit $u,v$ proves \eqref{eq:a3}.

Finally, for arbitrary $x,w\in\R^d$, Taylor's formula with integral remainder gives
\begin{align*}
&\psi(x+w)-\psi(x)-\ip{\nabla\psi(x)}{w}
-\frac12w^\top\nabla^2\psi(x)w\\
&\quad=\int_0^1(1-t)
 w^\top[\nabla^2\psi(x+tw)-\nabla^2\psi(x)]w\,\dd t.
\end{align*}
Using \eqref{eq:a3} and $\int_0^1t(1-t)\,\dd t=1/6$ proves \eqref{eq:taylor-psi}.
\end{proof}

\begin{lemma}[Perturbation of centered Gaussian laws]
\label{lem:gaussian-cov}
Let $S_1,S_2\succ0$ and suppose $S_1,S_2\succeq\lambda\Id_d$ for some $\lambda>0$. Then
\begin{equation}
\Wone(\cN(0,S_1),\cN(0,S_2))
\le\frac{\norm{S_1-S_2}_F}{2\sqrt{\lambda}}.
\end{equation}
\end{lemma}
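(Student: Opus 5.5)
We will prove Lemma~\ref{lem:gaussian-cov} by a synchronous coupling combined with a square-root perturbation bound. Let $G_0\sim\cN(0,\Id_d)$ and set $U=S_1^{1/2}G_0$ and $V=S_2^{1/2}G_0$. This pair is a coupling of $\cN(0,S_1)$ and $\cN(0,S_2)$. Weak duality then gives
\[
\Wone(\cN(0,S_1),\cN(0,S_2))\le\E\norm{(S_1^{1/2}-S_2^{1/2})G_0}.
\]
Jensen's inequality bounds the right-hand side by
\[
\bigl(\E\norm{(S_1^{1/2}-S_2^{1/2})G_0}^2\bigr)^{1/2}
=\norm{S_1^{1/2}-S_2^{1/2}}_F.
\]
It therefore suffices to prove
\[
\norm{S_1^{1/2}-S_2^{1/2}}_F\le\frac{\norm{S_1-S_2}_F}{2\sqrt\lambda}.
\]

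For this matrix inequality, write $R_i=S_i^{1/2}$ and $\Delta=R_1-R_2$. We use the identity
\[
S_1-S_2=R_1\Delta+\Delta R_2,
\]
which holds because $R_1(R_1-R_2)+(R_1-R_2)R_2=R_1^2-R_2^2$, the cross terms $R_1R_2$ cancelling. Take the Frobenius inner product of both sides with $\Delta$:
\[
\frobip{\Delta}{S_1-S_2}=\tr(\Delta R_1\Delta)+\tr(\Delta\Delta R_2).
\]
Since $\Delta$ is symmetric, $\tr(\Delta R_1\Delta)=\tr(R_1\Delta^2)$. Each term is a trace of a positive-definite matrix against the positive semidefinite matrix $\Delta^2$. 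From $R_i\succeq\sqrt\lambda\,\Id_d$ we get $\tr(R_i\Delta^2)\ge\sqrt\lambda\,\norm{\Delta}_F^2$. Hence
\[
2\sqrt\lambda\,\norm{\Delta}_F^2\le\frobip{\Delta}{S_1-S_2}\le\norm{\Delta}_F\norm{S_1-S_2}_F,
\]
where the last step is Cauchy--Schwarz. Dividing by $\norm{\Delta}_F$ (the case $\Delta=0$ is trivial) gives the claim.

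The only delicate point is that $R_1$ and $R_2$ need not commute. The trace argument above handles this directly and requires no commutativity. An alternative route would be the integral representation
\[
S^{1/2}=\pi^{-1}\int_0^\infty t^{-1/2}S(S+t)^{-1}\dd t,
\]
but the trace argument is shorter and gives exactly the constant $1/(2\sqrt\lambda)$.
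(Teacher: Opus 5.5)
Your proposal is correct and matches the paper's proof step for step. You use the synchronous coupling $S_i^{1/2}G_0$ with Jensen to reduce to $\norm{S_1^{1/2}-S_2^{1/2}}_F$, then pair the Sylvester identity $S_1-S_2=S_1^{1/2}\Delta+\Delta S_2^{1/2}$ with $\Delta$ in the Frobenius inner product and combine $S_i^{1/2}\succeq\sqrt\lambda\,\Id_d$ with Cauchy--Schwarz to get the constant $1/(2\sqrt\lambda)$.
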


\begin{proof}
Let $G_0\sim\cN(0,\Id_d)$ and use the common-randomness coupling $S_1^{1/2}G_0$ and $S_2^{1/2}G_0$. Jensen's inequality gives
\begin{align*}
\Wone(\cN(0,S_1),\cN(0,S_2))
&\le\E\norm{(S_1^{1/2}-S_2^{1/2})G_0}\\
&\le\bigl(\E\norm{(S_1^{1/2}-S_2^{1/2})G_0}^2\bigr)^{1/2}\\
&=\norm{S_1^{1/2}-S_2^{1/2}}_F.
\end{align*}
We next control the difference of the matrix square roots. Set
$\Delta_S=S_1^{1/2}-S_2^{1/2}$. From this definition, we obtain the Sylvester identity
\[
S_1-S_2=S_1^{1/2}\Delta_S+\Delta_S S_2^{1/2}.
\]
Cauchy--Schwarz for the Frobenius inner product then gives
\[
\norm{S_1-S_2}_F\norm{\Delta_S}_F
\ge\left|\frobip{\Delta_S}{S_1-S_2}\right|
=\frobip{\Delta_S}{S_1^{1/2}\Delta_S+\Delta_S S_2^{1/2}}.
\]
The two terms on the right are nonnegative. More explicitly,
\begin{align*}
\frobip{\Delta_S}{S_1^{1/2}\Delta_S}
&=\tr(\Delta_S^\top S_1^{1/2}\Delta_S)
\ge\sqrt\lambda\norm{\Delta_S}_F^2,\\
\frobip{\Delta_S}{\Delta_S S_2^{1/2}}
&=\tr(\Delta_S^\top\Delta_S S_2^{1/2})
\ge\sqrt\lambda\norm{\Delta_S}_F^2.
\end{align*}
Here we used $S_i^{1/2}\succeq\sqrt\lambda\Id_d$ and the fact that $\Delta_S^\top\Delta_S$ is positive semidefinite. Therefore
\[
\frobip{\Delta_S}{S_1^{1/2}\Delta_S+\Delta_S S_2^{1/2}}
\ge2\sqrt\lambda\norm{\Delta_S}_F^2.
\]
For $\Delta_S\ne0$, division by $2\sqrt\lambda\norm{\Delta_S}_F$ yields
$\norm{\Delta_S}_F\le\norm{S_1-S_2}_F/(2\sqrt\lambda)$. The same inequality is immediate when $\Delta_S=0$. Combining this estimate with the coupling bound proves the result. Related matrix square-root perturbation inequalities are discussed by \citet{Schmitt1992,Bhatia2007}.
\end{proof}

\section{Matching lower bound from temporal dependence}
\label{app:lower}

This appendix proves Theorem~\ref{thm:lower}. The calculation has four steps. We first verify that the finite-state noise has a symmetric marginal distribution and zero autocovariance at every nonzero lag. We then identify the third-order temporal moment that remains and compute the exact second and third moments of the stationary weighted series. Kantorovich--Rubinstein duality requires a Lipschitz test function, whereas $x\mapsto x^3$ is not Lipschitz. We therefore evaluate the bounded $1$-Lipschitz test function $\sin x$ through an explicit two-state characteristic-function recursion.

\subsection{Markov model and third-order temporal dependence}

Take $f(x)=x^2/2$. Then $x_\star=0$, $H=1$, and the objective assumptions in \eqref{eq:f-ass} hold with $m=L=1$ and $M=0$. Let $(\eta_k)_{k\in\mathbb Z}$ be independent Rademacher random variables, so
\begin{equation}
\Pp(\eta_k=1)=\Pp(\eta_k=-1)=\frac12.
\end{equation}
Define
\begin{equation}
Z_k=(\eta_{k-1},\eta_k),
\qquad
\xi(Z_k)=\frac{\eta_k(3-\eta_{k-1})}{2}.
\end{equation}
The four possible noise values are displayed by
\[
\begin{array}{c|cc}
&\eta_k=1&\eta_k=-1\\
\hline
\eta_{k-1}=1&1&-1\\
\eta_{k-1}=-1&2&-2
\end{array}
\]
The state space is $\{-1,1\}^2$. From a state $(a,b)$, the chain moves to $(b,c)$, where $c$ is a fresh Rademacher variable. Thus
\[
P((a,b),\{(b,c)\})=\frac12,
\qquad c\in\{-1,1\}.
\]
Its invariant probability measure $\pi$ is uniform on the four states. After two transitions, both coordinates are fresh and independent of the initial state, which gives
\begin{equation}
P^2(z,\cdot)=\pi
\qquad\text{for every }z.
\label{eq:lower-two-step-mixing}
\end{equation}
Equation \eqref{eq:lower-two-step-mixing} verifies \eqref{eq:uniform-ergodicity}, for example with $C_Z=1$ and $\rho=1/2$. The observable satisfies \eqref{eq:xi-ass} with $b=2$.

The four equally likely values of $\xi(Z_0)$ are $-2,-1,1,2$. Hence the marginal distribution is symmetric and
\begin{equation}
\E\xi(Z_0)=0,
\qquad
\E\xi(Z_0)^2=\frac52,
\qquad
\E\xi(Z_0)^3=0.
\label{eq:lower-marginal-moments}
\end{equation}
For every $\ell\ge1$, the variable $\xi(Z_\ell)$ contains the fresh factor $\eta_\ell$ to the first power. Conditional on all earlier Rademacher variables, this factor has mean zero. Therefore
\begin{equation}
\E[\xi(Z_0)\xi(Z_\ell)]=0,
\qquad \ell\ge1.
\label{eq:lower-cov-zero-app}
\end{equation}
Stationarity gives the same conclusion at negative lags. The long-run covariance is therefore
\begin{equation}
\Sigma_M=\E\xi(Z_0)^2=\frac52.
\end{equation}
Since $H=1$, the Lyapunov equation $2\Sigma=\Sigma_M$ gives $\Sigma=5/4$.

At second order, this process agrees with independent draws having the same marginal distribution because every nonzero-lag covariance vanishes. Third-order moments separate the two models. Direct conditioning gives
\begin{align}
\E[\xi(Z_0)\xi(Z_1)^2]
&=\E\left[
\frac{\eta_0(3-\eta_{-1})}{2}
\frac{(3-\eta_0)^2}{4}
\right]\notag\\
&=\frac18\E[3-\eta_{-1}]\,
\E[\eta_0(3-\eta_0)^2]
=-\frac94.
\label{eq:lower-mixed-app}
\end{align}
Here $\E[3-\eta_{-1}]=3$ and
$\E[\eta_0(3-\eta_0)^2]=-6$. By contrast, for independent centered variables $\widetilde\xi_0$ and $\widetilde\xi_1$ with this same marginal law,
\[
\E[\widetilde\xi_0\widetilde\xi_1^2]
=\E\widetilde\xi_0\,\E\widetilde\xi_1^2=0.
\]
Thus \eqref{eq:lower-mixed-app} is generated by temporal dependence.

\subsection{Stationary weighted series and exact low-order moments}

Fix $\alpha\in(0,1)$ and set $r_\alpha=1-\alpha$. The recursion is
\[
X_{k+1}=r_\alpha X_k+\alpha\xi(Z_k).
\]
Iterating backward $N$ steps in a two-sided stationary realization gives
\[
X_0=r_\alpha^N X_{-N}
+\alpha\sum_{j=0}^{N-1}r_\alpha^j\xi(Z_{-1-j}).
\]
Iterating $|X_{k+1}|\le r_\alpha|X_k|+2\alpha$ gives $|X_0|\le r_\alpha^N|X_{-N}|+2(1-r_\alpha^N)$. For every $\delta>0$, stationarity then gives
\[
\Pp(|X_0|>2+\delta)
\le\Pp(|X_0|>2+\delta r_\alpha^{-N})\longrightarrow0.
\]
Letting $N\to\infty$ shows that $|X_0|\le2$ almost surely. By stationarity, the same bound holds for every $X_{-N}$. Hence $r_\alpha^NX_{-N}\to0$ almost surely. The remaining series converges absolutely, and
\begin{equation}
Y_\alpha=\frac{X_0}{\sqrt\alpha}
=\sqrt\alpha\sum_{j=0}^{\infty}r_\alpha^j\zeta_j
\qquad\text{almost surely},
\label{eq:lower-series-app}
\end{equation}
where we set $\varepsilon_j=\eta_{-1-j}$. The reversed variables remain independent Rademacher variables, and
\begin{equation}
\xi(Z_{-1-j})
=\frac{\eta_{-1-j}(3-\eta_{-2-j})}{2}
=\frac{\varepsilon_j(3-\varepsilon_{j+1})}{2}
=: \zeta_j.
\end{equation}
The sequence $(\zeta_j)$ is $1$-dependent because $\zeta_j$ only uses $(\varepsilon_j,\varepsilon_{j+1})$.

Time reversal changes the orientation of the mixed moment. The original quantity
\[
\E[\xi(Z_0)\xi(Z_1)^2]
\]
appears as $\E[\zeta_j^2\zeta_{j+1}]$ in the backward stationary series. This is the nonzero orientation in the third-moment calculation below.

The marginal moments in \eqref{eq:lower-marginal-moments} also hold for $\zeta_j$. If $j\ne k$, then $\E[\zeta_j\zeta_k]=0$. For $|j-k|\ge2$, this follows from independence and centering. For adjacent indices, the Rademacher variable appearing only in the variable at the left endpoint has mean zero. The variance of \eqref{eq:lower-series-app} is consequently
\begin{align}
\E[Y_\alpha^2]
&=\alpha\sum_{j=0}^{\infty}r_\alpha^{2j}\E\zeta_j^2
=\frac52\frac{\alpha}{1-r_\alpha^2}
=\frac{5}{2(2-\alpha)}
=\frac54+O(\alpha).
\label{eq:lower-var-exact}
\end{align}

For the third moment, only configurations with two equal indices and one adjacent index can contribute. There are two orientations. Direct calculation gives
\begin{equation}
\E[\zeta_j^2\zeta_{j+1}]=-\frac94,
\qquad
\E[\zeta_j\zeta_{j+1}^2]=0.
\end{equation}
For the first identity, independence separates the expectation as
\[
\frac18\E\bigl[\varepsilon_{j+1}(3-\varepsilon_{j+1})^2\bigr]
\E[3-\varepsilon_{j+2}]
=-\frac94.
\]
The second identity is zero because $\varepsilon_j$ appears to the first power and is independent of the remaining factors. A product involving three distinct indices also has expectation zero. If the three indices are not consecutive, independence separates off a centered factor. If they are consecutive, the leftmost Rademacher variable appears only to the first power and again has conditional mean zero.

Expanding the cube in \eqref{eq:lower-series-app} and grouping the two possible adjacent repeated-index patterns gives
\begin{align}
\E[Y_\alpha^3]
&=3\alpha^{3/2}\sum_{j=0}^{\infty}
\left[
 r_\alpha^{3j+1}\E(\zeta_j^2\zeta_{j+1})
+r_\alpha^{3j+2}\E(\zeta_j\zeta_{j+1}^2)
\right]\notag\\
&=-\frac{27}{4}\alpha^{3/2}
\sum_{j=0}^{\infty}r_\alpha^{3j+1}
=-\frac{27}{4}\frac{\alpha^{3/2}r_\alpha}{1-r_\alpha^3}.
\label{eq:lower-third-exact}
\end{align}
Since $1-r_\alpha^3=3\alpha-3\alpha^2+\alpha^3$,
\begin{equation}
\E[Y_\alpha^3]
=-\frac94\sqrt\alpha+O(\alpha^{3/2}).
\end{equation}
The filtered sum therefore has an order-$\sqrt\alpha$ third moment while every individual observation has a symmetric law.

\subsection{Elementary characteristic-function recursion}

Each $\zeta_j$ depends on two adjacent Rademacher variables. After the variables up to $\varepsilon_j$ have been processed, only the value of $\varepsilon_j$ is needed for the next factor. This two-state memory yields an explicit recursion for the characteristic function.

\begin{lemma}[Two-state characteristic-function expansion]
\label{lem:lower-cf-recursion}
For the stationary series in \eqref{eq:lower-series-app}, the characteristic function is nonzero at frequency one for all sufficiently small $\alpha$. Here $\mathrm{i}=\sqrt{-1}$. With the logarithm defined by the near-identity product in the proof,
\begin{equation}
\log\E[e^{\mathrm{i}Y_\alpha}]
=-\frac12\E[Y_\alpha^2]
-\frac{\mathrm{i}}6\E[Y_\alpha^3]
+O(\alpha)
\qquad\text{as }\alpha\downarrow0.
\label{eq:lower-cf-expansion}
\end{equation}
\end{lemma}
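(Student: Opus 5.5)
The plan is to compute $\E[e^{\mathrm{i}Y_\alpha}]$ exactly as a product of $2\times2$ transfer matrices, and then to expand the product by perturbation about the rank-one averaging projection. Set $t_j=\sqrt\alpha\,r_\alpha^j$, so that $Y_\alpha=\sum_{j\ge0}t_j\zeta_j$ with $\zeta_j=\varepsilon_j(3-\varepsilon_{j+1})/2$. Since $|Y_\alpha|\le 2/\sqrt\alpha$ and the tail $\sum_{j\ge J}t_j\zeta_j$ tends to zero uniformly, it suffices to treat the finite sums $Y^{(J)}=\sum_{j<J}t_j\zeta_j$ with bounds uniform in $J$, and then let $J\to\infty$.

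\emph{Transfer-matrix recursion.} Define the row vector $v_j(a)=\E\bigl[e^{\mathrm{i}\sum_{k<j}t_k\zeta_k}\ind_{\{\varepsilon_j=a\}}\bigr]$ for $a\in\{\pm1\}$, with $v_0=(\tfrac12,\tfrac12)$. Conditioning on $\varepsilon_{j+1}=b$ gives
\[
v_{j+1}=v_jM_j,\qquad M_j(a,b)=\tfrac12 e^{\mathrm{i}t_ja(3-b)/2}.
\]
Hence $\E[e^{\mathrm{i}Y^{(J)}}]=v_J\mathbf 1$. Write $e=(1,1)$ and split $v_j=c_j\tfrac12e+d_j(1,-1)$. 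Because $M_j=Q+O(t_j)$ with $Q=\tfrac12\mathbf 1e$, the antisymmetric component is killed at leading order. Solving the two scalar recursions to third order in $t$ should give $d_{j+1}=c_j\,O(t_j)+d_j\,O(t_j)$, and, after substituting $d_j\approx c_{j-1}\cdot(\text{explicit }t_{j-1}\text{ term})$,
\[
c_{j+1}=c_j\Bigl(1-\tfrac12t_j^2\E\zeta^2-\tfrac{\mathrm{i}}{2}t_{j-1}^2t_j\E[\zeta_{j-1}^2\zeta_j]+\beta_j\Bigr).
\]
Here the coefficient of $t_{j-1}^2t_j$ is to be matched by direct expansion, including the $t_{j-1}t_j^2$ orientation, which vanishes. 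The error $\beta_j$ will be bounded by $C(t_j^3+t_{j-1}^3)\,|t_j-t_{j-1}|/t_j+C(t_{j-1}+t_j)^4$ plus the diagonal cubic term, which is zero because $\E\zeta^3=0$.

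\emph{Summation.} For small $\alpha$ every factor is $1+O(\sqrt\alpha)$, so the product is nonzero, and I define the logarithm as the sum of the principal logarithms of the factors. The quadratic part sums to exactly $-\tfrac12\sum_jt_j^2\E\zeta^2=-\tfrac12\E[Y_\alpha^2]$ by \eqref{eq:lower-var-exact}. The cubic part sums to $-\tfrac{\mathrm i}{2}\sum_jt_j^2t_{j+1}\E[\zeta_j^2\zeta_{j+1}]$, which is $-\tfrac{\mathrm i}6\E[Y_\alpha^3]$ by the computation \eqref{eq:lower-third-exact}. The remaining terms are controlled by:
\begin{itemize}
\item $\sum_jt_j^4=O(\alpha)$;
\item the quadratic terms $\log(1+x)-x=O(x^2)$, which contribute $O(\sum t_j^4)$;
\item the index-shift mismatches $t_j-t_{j+1}=\alpha t_j$, which contribute $O(\alpha\sum_jt_j^3)=O(\alpha^{3/2})$.
\end{itemize}
The boundary correction from the final vector $v_J\mathbf 1$ (the $d_J$ component) is $O(t_J)$ and vanishes as $J\to\infty$.

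\emph{Main obstacle.} The delicate step is the exact bookkeeping of the third-order terms in the two-component recursion. The feedback of $d_j$ into $c_{j+1}$ must reproduce precisely the adjacent mixed moment $\E[\zeta_j^2\zeta_{j+1}]=-9/4$, with the correct weight $t_j^2t_{j+1}$ rather than an asymmetric variant. At the same time, every leftover term must be either fourth order or a cubic term multiplied by a weight difference of size $O(\alpha)$. The first thing I would try is to avoid guessing coefficients: compute $\E\bigl[e^{\mathrm{i}(t_{j-1}\zeta_{j-1}+t_j\zeta_j)}\mid\cdot\bigr]$ symbolically to third order, and check the resulting $c$-recursion against the known cumulants $\tfrac52$ and $-\tfrac94$. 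Using the identity $\sum_jt_j^2t_{j+1}=r_\alpha\sum_jt_j^3$, this matching then delivers \eqref{eq:lower-cf-expansion}.
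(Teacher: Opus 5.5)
Your plan is essentially the paper's proof. Your $v_j(\pm1)$ are its $u_j^\pm$, your $c_j$ and $2d_j$ are its $s_j$ and $d_j$ (with $\theta_j=t_j$), and your target factor $1-\tfrac54t_j^2+\tfrac{9\mathrm{i}}{8}t_{j-1}^2t_j+O(t^4)$ is exactly its expansion of $g_j=s_{j+1}/s_j$; the paper gets the uniform-in-$j$ control you need by induction on the ratio $\chi_j=d_j/s_j$, first the coarse bound $|\chi_j|\le C\theta_{j-1}^2$ and then $\Re\chi_j=\tfrac34\theta_{j-1}^2+O(\theta_{j-1}^4)$, $\Im\chi_j=O(\theta_{j-1}^3)$. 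When you do the bookkeeping, use that the coupling from $c_j$ into $d_{j+1}$ is $\tfrac14(\cos t_j-\cos 2t_j)=\tfrac38t_j^2+O(t_j^4)$, not merely $O(t_j)$ as you wrote, because the sign $\varepsilon_j$ averages out at first order; this is what excludes a spurious $t_{j-1}t_j$ term that would change the variance, and since $v_J\mathbf 1=c_J$ exactly, no boundary correction from $d_J$ is needed.
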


\begin{proof}
Set
\[
\theta_j=\sqrt\alpha\,r_\alpha^j,
\qquad
Y_{\alpha,N}=\sum_{j=0}^{N-1}\theta_j\zeta_j.
\]
We integrate the Rademacher variables from left to right while retaining the terminal sign. Define
\begin{align*}
u_j^+
&=\E\!\left[
 e^{\mathrm{i}\sum_{k=0}^{j-1}\theta_k\zeta_k}
 \ind_{\{\varepsilon_j=1\}}
\right],\\
u_j^-
&=\E\!\left[
 e^{\mathrm{i}\sum_{k=0}^{j-1}\theta_k\zeta_k}
 \ind_{\{\varepsilon_j=-1\}}
\right],
\end{align*}
and set
\[
s_j=u_j^++u_j^-,
\qquad
d_j=u_j^+-u_j^-.
\]
Since $u_0^+=u_0^-=1/2$, we have $s_0=1$ and $d_0=0$. Also,
$s_N=\E[e^{\mathrm{i}Y_{\alpha,N}}]$.

Condition on $\varepsilon_j$ and then average over the fresh sign $\varepsilon_{j+1}$. If $\varepsilon_{j+1}=1$, then
$\zeta_j=\varepsilon_j$ and the new factor is $e^{\mathrm{i}\theta_j\varepsilon_j}$. If $\varepsilon_{j+1}=-1$, then
$\zeta_j=2\varepsilon_j$ and the new factor is $e^{2\mathrm{i}\theta_j\varepsilon_j}$. Therefore
\begin{align*}
u_{j+1}^+
&=\frac12\left(e^{\mathrm{i}\theta_j}u_j^+
+e^{-\mathrm{i}\theta_j}u_j^-\right),\\
u_{j+1}^-
&=\frac12\left(e^{2\mathrm{i}\theta_j}u_j^+
+e^{-2\mathrm{i}\theta_j}u_j^-\right).
\end{align*}
Adding and subtracting gives
\begin{align}
s_{j+1}
&=a(\theta_j)s_j+\mathrm{i}b(\theta_j)d_j,
\label{eq:lower-s-rec}\\
d_{j+1}
&=c(\theta_j)s_j+\mathrm{i}e(\theta_j)d_j,
\label{eq:lower-d-rec}
\end{align}
where
\begin{align*}
a(t)&=\frac{\cos t+\cos(2t)}2,
&b(t)&=\frac{\sin t+\sin(2t)}2,\\
c(t)&=\frac{\cos t-\cos(2t)}2,
&e(t)&=\frac{\sin t-\sin(2t)}2.
\end{align*}
For $|t|\le1/2$, Taylor's theorem gives uniform expansions
\begin{align}
a(t)&=1-\frac54t^2+O(t^4),
&b(t)&=\frac32t+O(t^3),
\label{eq:lower-ab-exp}\\
c(t)&=\frac34t^2+O(t^4),
&e(t)&=-\frac12t+O(t^3).
\label{eq:lower-ce-exp}
\end{align}
All $O(\cdot)$ constants below are uniform in $j$ and in sufficiently small $\alpha$.

We now control the normalized terminal-state imbalance. Whenever $s_j\ne0$, let
\[
\chi_j=\frac{d_j}{s_j}.
\]
Dividing \eqref{eq:lower-d-rec} by \eqref{eq:lower-s-rec} gives
\begin{equation}
\chi_{j+1}
=\frac{c(\theta_j)+\mathrm{i}e(\theta_j)\chi_j}
{a(\theta_j)+\mathrm{i}b(\theta_j)\chi_j}.
\label{eq:lower-chi-rec}
\end{equation}
We first establish a coarse bound that justifies every division. Choose $\alpha$ small enough that $\theta_j\le\sqrt\alpha\le1/2$ and $r_\alpha^{-1}\le2$. The Taylor bounds imply that, for a numerical constant $C_2$ and every $|t|\le1/2$,
\[
|a(t)-1|\le C_2t^2,
\qquad
|b(t)|+|e(t)|\le C_2|t|,
\qquad
|c(t)|\le C_2t^2.
\]
Set $C_\chi=4C_2$. We claim that
\begin{equation}
|\chi_j|\le C_\chi\theta_{j-1}^2,
\qquad
|a(\theta_j)+\mathrm{i}b(\theta_j)\chi_j|\ge\frac12,
\qquad j\ge1.
\label{eq:lower-chi-coarse}
\end{equation}
For the base case, $\chi_0=0$, so $|a(\theta_0)|\ge1-C_2\theta_0^2$. Thus $\chi_1$ is well defined and $|\chi_1|\le2C_2\theta_0^2\le C_\chi\theta_0^2$ once $\alpha$ is small enough.

It remains to verify the induction implication. Suppose $|\chi_j|\le C_\chi\theta_{j-1}^2$ and put $t=\theta_j$. Since $\theta_{j-1}\le2t$,
\begin{align*}
|c(t)+\mathrm{i}e(t)\chi_j|
&\le C_2t^2+4C_2C_\chi t^3,\\
|a(t)+\mathrm{i}b(t)\chi_j|
&\ge1-C_2t^2-4C_2C_\chi t^3.
\end{align*}
After decreasing the upper bound on $\alpha$, the second quantity is at least $1/2$, and twice the first is at most $C_\chi t^2$. Equation~\eqref{eq:lower-chi-rec} then gives $|\chi_{j+1}|\le C_\chi\theta_j^2$. Applying this implication first at $j=1$ and then successively proves \eqref{eq:lower-chi-coarse}. Since $s_{j+1}=s_j[a(\theta_j)+\mathrm{i}b(\theta_j)\chi_j]$ and $s_0=1$, it also proves that every $s_j$ is nonzero.

The coarse estimate allows us to sharpen the real and imaginary parts. We claim that
\begin{equation}
\Re\chi_j=\frac34\theta_{j-1}^2+O(\theta_{j-1}^4),
\qquad
\Im\chi_j=O(\theta_{j-1}^3),
\qquad j\ge1.
\label{eq:lower-chi-exp}
\end{equation}
For $j=1$, the ratio $\chi_1=c(\theta_0)/a(\theta_0)$ is real, and \eqref{eq:lower-chi-exp} follows from \eqref{eq:lower-ab-exp}--\eqref{eq:lower-ce-exp}.

Assume \eqref{eq:lower-chi-exp} at index $j$ and write
$\chi_j=x_j+\mathrm{i}y_j$. Put $t=\theta_j$. Since
$\theta_{j-1}=t/r_\alpha$ and $r_\alpha^{-1}\le2$,
\[
x_j=O(t^2),
\qquad
y_j=O(t^3).
\]
The numerator and denominator of \eqref{eq:lower-chi-rec} become
\begin{align*}
c(t)+\mathrm{i}e(t)\chi_j
&=c(t)-e(t)y_j+\mathrm{i}e(t)x_j\\
&=\frac34t^2+O(t^4)+\mathrm{i}O(t^3),\\
a(t)+\mathrm{i}b(t)\chi_j
&=a(t)-b(t)y_j+\mathrm{i}b(t)x_j\\
&=1+O(t^2)+\mathrm{i}O(t^3).
\end{align*}
The reciprocal of the denominator is
$1+O(t^2)+\mathrm{i}O(t^3)$. Multiplication gives
\[
\Re\chi_{j+1}=\frac34t^2+O(t^4),
\qquad
\Im\chi_{j+1}=O(t^3),
\]
which proves \eqref{eq:lower-chi-exp}.

Define the one-step characteristic-function factor
\[
g_j=\frac{s_{j+1}}{s_j}
=a(\theta_j)+\mathrm{i}b(\theta_j)\chi_j.
\]
For $j=0$, use $\chi_0=0$. For $j\ge1$, substitute
\eqref{eq:lower-chi-exp} into \eqref{eq:lower-ab-exp}. Since
$\mathrm{i}b\chi=-b\Im\chi+\mathrm{i}b\Re\chi$, we obtain
\begin{equation}
g_j-1
=-\frac54\theta_j^2
+\frac{9\mathrm{i}}8
\ind_{\{j\ge1\}}\theta_j\theta_{j-1}^2
+O(\theta_j^4).
\label{eq:lower-g-exp}
\end{equation}
The real correction from $-b\Im\chi$ is $O(\theta_j^4)$. The leading imaginary term is
$(3\theta_j/2)(3\theta_{j-1}^2/4)$.

For small $\alpha$, \eqref{eq:lower-g-exp} gives $|g_j-1|\le1/2$ uniformly in $j$. Let $\log g_j$ denote the principal logarithm on this disk. Since
$\log(1+w)=w+O(|w|^2)$,
\[
\log g_j
=-\frac54\theta_j^2
+\frac{9\mathrm{i}}8
\ind_{\{j\ge1\}}\theta_j\theta_{j-1}^2
+O(\theta_j^4).
\]
Because $s_0=1$ and $s_N=\prod_{j=0}^{N-1}g_j$, define
\[
\log s_N=\sum_{j=0}^{N-1}\log g_j.
\]
The series of logarithms converges absolutely because
$\sum_j\theta_j^2<\infty$. Its limit defines the logarithm in the statement of the lemma, and the corresponding infinite product is nonzero. Also,
$Y_{\alpha,N}\to Y_\alpha$ almost surely. Bounded convergence gives
$s_N\to\E[e^{\mathrm{i}Y_\alpha}]$. Letting $N\to\infty$ yields
\begin{align}
\log\E[e^{\mathrm{i}Y_\alpha}]
&=-\frac54\sum_{j\ge0}\theta_j^2
+\frac{9\mathrm{i}}8\sum_{j\ge1}\theta_j\theta_{j-1}^2
+O\!\left(\sum_{j\ge0}\theta_j^4\right).
\label{eq:lower-log-series}
\end{align}
The geometric sums are
\begin{equation}
\sum_{j\ge0}\theta_j^2=\frac1{2-\alpha},
\qquad
\sum_{j\ge1}\theta_j\theta_{j-1}^2
=\frac{\alpha^{3/2}r_\alpha}{1-r_\alpha^3},
\qquad
\sum_{j\ge0}\theta_j^4=O(\alpha).
\end{equation}
Equation~\eqref{eq:lower-var-exact} identifies the first term in
\eqref{eq:lower-log-series} as $-\E[Y_\alpha^2]/2$. Equation~\eqref{eq:lower-third-exact} identifies the second as
$-\mathrm{i}\E[Y_\alpha^3]/6$. This proves \eqref{eq:lower-cf-expansion}.
\end{proof}

Apply Lemma~\ref{lem:lower-cf-recursion} and substitute the exact moments
\eqref{eq:lower-var-exact} and \eqref{eq:lower-third-exact}. We obtain
\begin{align}
\log\E[e^{\mathrm{i}Y_\alpha}]
&=-\frac12\frac{5}{2(2-\alpha)}
+\frac{9\mathrm{i}}8
\frac{\alpha^{3/2}r_\alpha}{1-r_\alpha^3}
+O(\alpha)\notag\\
&=-\frac58+\frac{3\mathrm{i}}8\sqrt\alpha+O(\alpha).
\end{align}
Exponentiating gives
\begin{equation}
\E[e^{\mathrm{i}Y_\alpha}]
=e^{-5/8}\left(1+\frac{3\mathrm{i}}8\sqrt\alpha+O(\alpha)\right).
\end{equation}
Taking imaginary parts yields
\begin{equation}
\E[\sin(Y_\alpha)]
=\frac38e^{-5/8}\sqrt\alpha+O(\alpha).
\label{eq:lower-sine-app}
\end{equation}

\subsection{Wasserstein test function and independent-sampling comparison}

Let $G\sim\cN(0,5/4)$. The function $h(x)=\sin x$ is $1$-Lipschitz, and symmetry gives $\E h(G)=0$. Kantorovich--Rubinstein duality and \eqref{eq:lower-sine-app} imply
\[
\Wone(\Law(Y_\alpha),\Law(G))
\ge|\E\sin(Y_\alpha)|.
\]
Thus there exists $\alpha_0>0$ such that, for every $0<\alpha\le\alpha_0$,
\[
|\E\sin(Y_\alpha)|
\ge\frac{3}{16}e^{-5/8}\sqrt\alpha.
\]
This proves Theorem~\ref{thm:lower} with $c_0=\tfrac{3}{16}e^{-5/8}$.

It remains to make the dependence comparison explicit. Let $(\widetilde\zeta_j)_{j\ge0}$ be independent with the same marginal law as $\zeta_0$, and define
\[
\widetilde Y_\alpha
=\sqrt\alpha\sum_{j=0}^{\infty}r_\alpha^j\widetilde\zeta_j.
\]
Independence and symmetry make the full sequence jointly invariant under global sign reversal. Hence $\widetilde Y_\alpha$ is symmetric and $\E\sin(\widetilde Y_\alpha)=0$ for every $\alpha\in(0,1)$. Moreover, \eqref{eq:lower-cov-zero-app} shows that $Y_\alpha$ and $\widetilde Y_\alpha$ have the same variance, so they share the same limiting Gaussian comparator. The Markov construction has the order-$\sqrt\alpha$ sine term generated by \eqref{eq:lower-mixed-app}, while any jointly sign-symmetric construction has zero sine expectation.

\subsection{Interpretation of the lower bound}

The quadratic objective $f(x)=x^2/2$ has exactly linear drift, so the nonlinear Taylor remainder from the upper-bound proof is absent. The order-$\sqrt\alpha$ discrepancy therefore comes entirely from temporal dependence. Nevertheless, the Markovian and independent models agree in their one-time noise law and second-order structure. Both have the symmetric marginal distribution on $\{-2,-1,1,2\}$ and long-run covariance $\Sigma_M=5/2$, and every nonzero-lag covariance in the Markovian model is zero. Thus both have the limiting Gaussian law $\cN(0,5/4)$.

The distinction appears in the adjacent mixed moment
\[
\E[\xi(Z_0)\xi(Z_1)^2]=-\frac94.
\]
This moment survives the geometric filtering in the stationary recursion and produces an order-$\sqrt\alpha$ third cumulant. The characteristic-function recursion converts that local temporal asymmetry into the following expectation:
\[
\E[\sin(Y_\alpha)]
=\frac38e^{-5/8}\sqrt\alpha+O(\alpha).
\]
Independent sampling has the same covariance calculation, while joint sign symmetry makes its sine expectation exactly zero.

The scale can also be read directly from the exact third-moment sum. Each adjacent repeated-index pattern has weight of order $\alpha^{3/2}r_\alpha^{3j}$, and the number of effectively contributing indices is of order $\alpha^{-1}$. Equivalently,
\[
\alpha^{3/2}\sum_{j\ge0}r_\alpha^{3j}
=\frac{\alpha^{3/2}}{1-r_\alpha^3}
=O(\!\sqrt\alpha).
\]
The two-state recursion transfers this scale from an unbounded third-moment calculation to the bounded sine test function required by Wasserstein duality.

The coefficient $3e^{-5/8}/8$ in \eqref{eq:lower-sine-app} combines the order-$\sqrt\alpha$ imaginary correction $3/8$ with the Gaussian damping factor $e^{-5/8}$.

The two proofs separate three effects. Strong convexity gives block contraction, Hessian regularity controls nonlinear drift, and the Poisson correctors encode temporal dependence. Under independent sampling, $\cC(z)=\Sigma_M$ and $W=0$. The lower example shows that higher-order dependence can still affect the order-$\sqrt\alpha$ correction. Thus long-run covariance determines the Gaussian limit, while the first correction may also reflect objective curvature and higher-order temporal structure.

\end{document}